\pdfoutput=1
\documentclass{article}
\usepackage[a4paper, total={6in, 8in}]{geometry}
\usepackage[utf8]{inputenc}
\usepackage{amsmath}
\usepackage{amssymb}
\usepackage{algorithm}
\usepackage{algorithmicx}
\usepackage{algpseudocode}
\usepackage{multirow}
\usepackage{subcaption}
\captionsetup{compatibility=false}
\usepackage{xcolor}
\usepackage{comment}
\usepackage{breqn}
\usepackage{hyperref}
\usepackage{todonotes} 
\usepackage{color}
\usepackage{soul}
\usepackage{amsthm}
\usepackage{biblatex}
\addbibresource{ms.bib}
\bibliography{ms}
\usepackage[english]{babel}
\newtheorem{definition}{Definition}
\newtheorem{theorem}{Theorem}


\AtBeginDocument{%
  \providecommand\BibTeX{{%
    \normalfont B\kern-0.5em{\scshape i\kern-0.25em b}\kern-0.8em\TeX}}}





\begin{document}
\title{Inferential SIR\_GN: Scalable Graph Representation Learning}



\author{Janet Layne,  Boise State University, janetlayne@boisestate.edu\\ \ Edoardo Serra, Boise State University, edoardoserra@boisestate.edu}


\date{May 2021}
\maketitle
\section*{}
    Graph representation learning methods generate numerical vector representations for the nodes in a network, thereby enabling their use in standard machine learning models. These methods aim to preserve relational information, such that nodes that are similar in the graph are found close to one another in the representation space. Similarity can be based largely on one of two notions: connectivity or structural role. In tasks where node structural role is important, connectivity based methods show poor performance. Recent work has begun to focus on scalability of learning methods to massive graphs of millions to billions of nodes and edges. Many unsupervised node representation learning algorithms are incapable of scaling to large graphs, and are unable to generate node representations for unseen nodes. In this work, we propose Inferential SIR-GN, a model which is pre-trained on random graphs, then computes node representations rapidly, including for very large networks. We demonstrate that the model is able to capture node's structural role information, and show excellent performance at node and graph classification tasks, on unseen networks. Additionally, we observe the scalability of Inferential SIR-GN is comparable to the fastest current approaches for massive graphs.

\section{Introduction}

Graphs are universal data structures for capturing relational information about a dataset in a very simple arrangement: a set of nodes representing each entity, and edges representing connections or relations between entities. This data structure is easy to access and perform simple calculations on the graph. However, modern tasks to utilize graph data, such as node classification, link or edge prediction, community detection, clustering, graph classification etc. rely on standard Machine Learning (ML) approaches. These techniques are not able to operate on graph data in its original form; the graph must be transformed into a vector of numerical features that enable discrimination between different entities. It is an ongoing challenge to project graph data into a feature space in a manner that preserves the relational structure of the original network.  Hand-engineered features perform well, but are time-prohibitive, and can require extensive domain expertise. Additionally, features specific to one graph are unlikely to generalize well to new graphs, possibly even within the same domain.\par
Graph representation learning refers to a body of approaches that automatically generates a numerical vector representation for each node from a list of nodes and edges. Domain agnostic, these methods use only the information present in the graph itself. The goal of these methods is to project the node into some low-dimensional latent space, while maintaining that similar nodes in the graph will be found close to one another in said space. Importantly, the notion of similarity is not fixed; approaches can largely be understood as either capturing node proximity or structural properties in their concept of similarity.\par
 Proximity-based similarity will generate similar node representations for nodes that have short paths between them, and many paths connecting them. Nodes close to one another and with many neighbors in common tend to be projected close to one another in the latent feature space. In Figure \ref{fig:intro} shows a simplified visual description of the difference between node connectivity-based similarity,and structure-based similarity. Consider that the two subgraphs formed by nodes $a,b,c$ and $d,f,g$ are structures distant from one another in the same graph. Similarity based upon node connectivity will create similar node representations for nodes $a$ and $b$, and for nodes $d$ and $f$, but $a$ and $d$ will be considered very different. However, when node structural roles are considered, nodes $a$ and $d$ will be considered identical even if no path that connects them. Either methodology is useful in certain circumstances: as an example, link prediction is used to predict whether two entities will be connected. Methods which preserve node connectivity are likely to perform well at such tasks, as two nodes with many neighbors in common are also likely connected. For node classification, let us consider example of a social network, where we want to predict the moderator of a social group. In this instance, the node structural information is a better predictor of the functional role of an entity. Previous works \cite{joaristi2021sir} have demonstrated that node representations which preserve structural information have superior performance in several tasks.

\begin{figure*}[t]
    \center
    \begin{subfigure}[b]{0.8\textwidth}{\includegraphics[width=\linewidth]{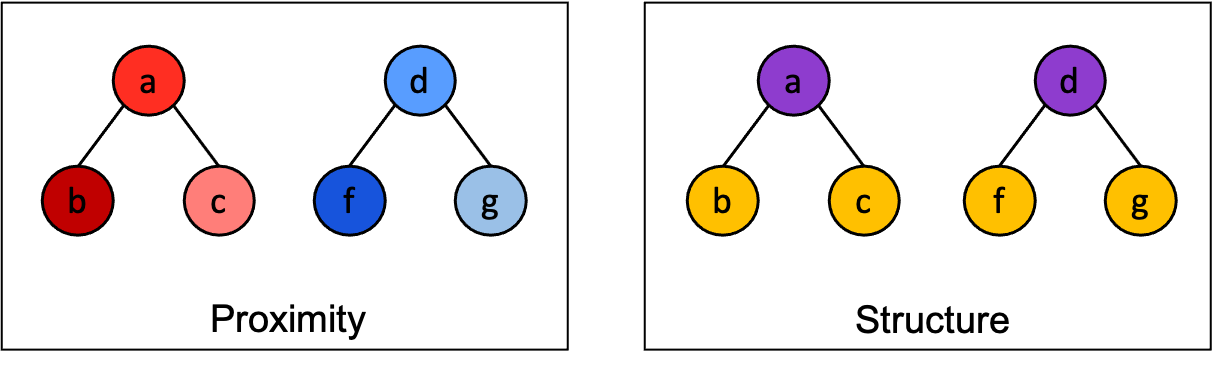}}
    \end{subfigure}
    \caption{Simplified schematic of node similarity. Node color indicates similarity of node representation}\label{fig:intro}
\end{figure*}

For many graph representation learning methods, the time complexity makes scaling to massive graphs prohibitive. A great deal of recent work has attempted to generate algorithms that can scale to graphs with nodes and edges into the millions to billions. Much of this work focuses on random  walk based approaches, or factorization of a proximity matrix. Materializing the proximity matrix, however, can be prohibitively time consuming, so methods to improve scalability include approximating the matrix to be factorized.

A consistent disadvantage with many unsupervised graph representation learning techniques is their inability to generate node representations for unseen nodes. The entire graph must be embedded at one time, and the operation performed from scratch if the graph changes (nodes or edges are added or removed). As these methods have high training overheads, frequently updating dynamic graphs becomes computationally prohibitively expensive. Graph Neural Network based approaches, such as Graph Convolutional Networks, and Graph Isomorphism Networks, can extend learned models to embed unseen nodes, but often show poor performance due to their tendency to overfit.

We propose an improved Inferential SIR-GN, which uncouples training of the model from generation of node representations, increasing both scalability and capacity for generalization. Inferential SIR-GN trains a model on a large number of random graphs, with user-designated hyperparameters such as graph size, depth of exploration, and node representation size. As a result of upstream training, the time to generate node representations is drastically reduced, making our model easily scalable to graphs of any size. We will show that even models trained with hyperparameters that minimize training time have the ability to generate node embeddings that effectively capture the structure of large, complex graphs. Additionally, the same model can be used to rapidly create node and graph representations for widely variable datasets with respect to graph characteristics, and still produce excellent performance in downstream ML tasks. If desired, models can be more tailored to produce optimized performance, but we will show that minimal tailoring is usually required. Our model includes the capability to include both node and edge features, and can be trained on a variable number of features without losing performance.
The Inferential SIR-GN algorithm also includes a novel method for generating graph representations that improves graph classification tasks. We propose a structural pseudo-adjacency matrix that represents each graph, but is agnostic to node order, unlike standard adjacency matrices.
The remainder of this article will present related work, followed by descriptions of the Inferential SIR-GN algorithms and its modifications. Finally, experiments will be described and discussed that indicate the performance of Inferential SIR-GN for node and graph classification. We will show that Inferential SIR-GN can generate node representations on massive graphs as efficiently as the most rapid existing methods, and that those representations accurately capture node's structural information.

\section{Related Works}
In \cite{joaristi2021sir}, SIR-GN performed comparisons with node2vec\cite{grover2016node2vec}, struc2vec\cite{ribeiro2017struc2vec}, GraphWave\cite{donnat2018learning}, GraphSAGE\cite{hamilton2017inductive}, ARGA\cite{pan2018adversarially} and DRNE\cite{tu2018deep}. As the original SIR-GN algorithm performed as well or better than these at several tasks, we will restrict our comparisons to SIR-GN, along with algorithms which are designed to scale to very large graphs, as is Inferential SIR-GN.\par
VERSE \cite{verse2018} uses a single-layer neural network to learn similarity distributions from any similarity measure, with Personalized PageRank as default. Multiple methods factorize a proximity matrix to obtain node representations. To accomplish scalability to massive graphs, these methods aim to avoid materializing the proximity matrix. AROPE \cite{arope2018} is able to acquire any desired order proximity matrix by performing eigen-decomposition of the adjacency matrix. They show that a simple re-weighting can shift between different-order proximities. AROPE, however, can only operate on undirected graphs, as the adjacency matrix must be symmetric. In \cite{netmf2017}, authors showed that many random walk network embedding methods are implicitly factorizing a matrix, and the explicit factorization can create more powerful embeddings. However, because the matrix is dense, this is a costly endeavor and not scalable for massive graphs. In subsequent work \cite{netsmf2019}, these same authors propose NetSMF, which sparsifies the DeepWalk\cite{perozzi2014deepwalk} matrix using the spectral graph sparsification technique, followed by randomized singular value decomposition to factor the sparsified matrix. ProNE also uses matrix factorization, but with spectral propagation as an enhancement \cite{prone2019}. STRAP \cite{strap2019}factorizes the transpose proximity matrix M  = $\Pi + \Pi^T$, where $\Pi$ represents the approximate Personalized PageRank(PPR) matrix of graph $G$, and $\Pi^T$ is its transpose. NRP \cite{nrp2020} details the limitation of PPR proximities as relative to a source node, and proposes a method which also factorizes an approximated PPR matrix, followed node re-weighting based upon node degree. \par
We will also consider two graph neural network methods expected to show good performance with node structural classification tasks: graph convolution network\cite{kipf2016semi} and a graph isomorphism network \cite{xu2018powerful}. Both employ a method based upon the Weisfeiler-Lehman's graph isomorphism test, wherein a node's structural representation is updated via neighbor aggregation. For the GCN, aa mean aggregation function is used, while GIN relies upon a sum aggregation function for pooling node representations.

\section{Background}
Here we present concepts and definitions to be referred to throughout this work. We will then define the structural graph metrics that will be used to demonstrate the proficiency of Inferential SIR-GN at capturing node structural information. Table \ref{tab:notations} contains notations used throughout this work.\par

\begin{definition}
\textbf{Graph}: Let $G = (V,E)$ be a unlabeled undirected graph, or network, where $V$ denotes the set of vertices, or nodes, and $E\subseteq(V \times V)$ denotes the set of edges connecting the nodes in $G$. $|V|$ and $|E|$ denote the number of nodes and edges contained in the graph, respectively. The neighborhood of any node $u$ is given by $N(u) = \{v|(u,v)\in E $ $\:\:\:\: or \:\:\:\: (v,u)\in E\}$, and $|N(u)|$ denotes the size of this neighborhood or the node's degree.
\end{definition}

The Betweenness Centrality (BC) \cite{freeman1977set} of a node is the count of the number of times that node acts as a bridge along the shortest path between two other nodes. For every node pair that does not include $v$, the shortest path between them is determined, and the betweenness centrality for a node is the number of shortest paths that pass through $v$
 Formally the Betweenness Centrality  of node $v$ is defined as

\begin{equation} 
BC(v) = \sum_{s \ne v \ne t \in V}\frac{\sigma_{st}(v)}{\sigma_{st}},
\end{equation} 
where $\sigma_{st}(v)$ is the number o shortest paths from vertex $s$ to vertex $t$ paths that passes through vertex $v$ and $\sigma_{st}$ is the total number of shortest paths going from $s$ to $t$.

The Degree centrality (DC) \cite{freeman1978centrality} of a node $v$ is the fraction of the total nodes in the graph to which it is connected. This is mathematically defined as

\begin{equation} 
DC(v) = \frac{|N(v)|}{|V|}.
\end{equation} 

The Eigenvector Centrality (EC) \cite{bonacich1972factoring} is a measure of the influence of a node in a graph, based upon the influence of the nodes to which it is connected. Let $A = (a_{v,t})$ be the adjacency matrix where $a_{v,t} = 1$ if vertex $v$ is linked to vertex $t$, and $a_{v,t} = 0$ otherwise and $\lambda$ is a constant. The Eigenvector Centrality $x_v$ of node $v$ is defined as

\begin{equation} 
x_v = \frac{1}{\lambda}\sum_{t \in V}a_{v,t}x_t.
\end{equation}

Rearranged and rewritten in vector notation, this becomes the familiar eigenvector equation:
\begin{equation} 
Ax=\lambda x.
\end{equation}
The $vth$ component of the dominant eigenvector of the adjacency matrix is the eigenvector centrality of node $v$.

The PageRank (PR) \cite{brin1998anatomy} metric is a variation of the Eigenvector Centrality, originally developed to rank web pages for search engines. The PageRank of node $v$ is defined as

\begin{equation} 
PR(v) = \frac{1-\delta}{|V|}+\delta \sum_{u \in nbr(v)} \frac{PR(u)}{|\{v'|(u,v') \in E \}|},
\end{equation} 
where $\delta$ is a "damping factor", the probability of a node reaching another node via a walk, as opposed to being teleported to the other node.

The HITS algorithm \cite{kleinberg1999authoritative} (Hyperlink-Induced Topic Search) is a two-score link analysis algorithm originally designed to rate Web page relevance. A hub is a node that points to many other nodes, an authority is a node that is pointed to by many nodes. In a directed graph, the authority quantifies the value of the node based upon the total score of the hubs that point to it. The second score is the hub, which is based upon the total score of the authorities to which it points. In the case of an undirected graph, hub and authority scores for a node are the same.

The Node Clique Number (NCN) for a node is the size of the largest Maximal Clique containing the given node. The Maximal Clique\cite{ouyang1997dna} is defined as follows: given a set of nodes, some of which have edges in between them, the maximal clique is the largest subset of nodes in which every node is connected to every other in the subset.

\begin{table}[t]
\centering
\caption{Notations found throughout this work.}\label{tab:notations}
\begin{tabular}{cc}
\hline
Notation        & Description                  \\
\hline
$G$        &  given graph                   \\ 
$V$        &  set of nodes in the graph     \\
$E$        &  set of edges in the graph     \\
$|V|$      &  number of vertices            \\
$|E|$      &  number of edges               \\
$u$        & single node in the graph          \\
$N(u)$   &  set containing $u$'s neighbors \\
$|N(u)|$ &  degree of $u$                 \\
\hline
\end{tabular}%

\end{table}

\begin{table}[t]
\centering
\caption{Notations used in Algorithm 1.}\label{tab:notations:2}
\begin{tabular}{cc}
\hline
Notation        & Description                  \\
\hline
$m$        & The number of nodes chosen for the synthetic graph  \\ 
$g$        & The number of synthetic graphs chosen for training \\
$w$        & The number of clusters for the KMeans for graph representation \\
$c$        & The number of clusters for KMeans for node representation \\
$p$       & The number of PCA components\\
$d$        & The chosen depth to explore (hops)     \\
$deg$        & Array of node degree    \\
        \\ \hline
\end{tabular}%
\end{table}

\section{Methodology}
\subsection{General Description of the Method}
\begin{figure}[h!]
    \center
    \begin{subfigure}[b]{0.8\textwidth}{\includegraphics[width=\linewidth]{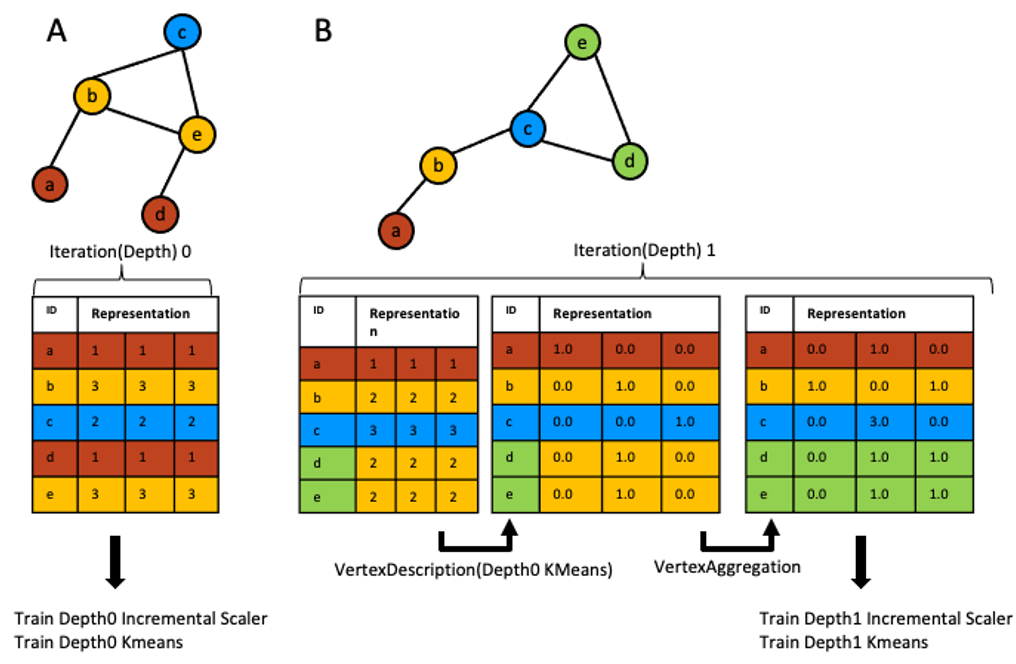}}
    \end{subfigure}
    \caption{Schematic of training Inferential SIR-GN. Graphs depicted represent simplified examples of random graphs that will be used to train the model and node representations show the result of each step of processing during training.}\label{fig:train:1}
\end{figure}
\begin{figure}[h!]
    \center
    \begin{subfigure}[b]{0.8\textwidth}{\includegraphics[width=\linewidth]{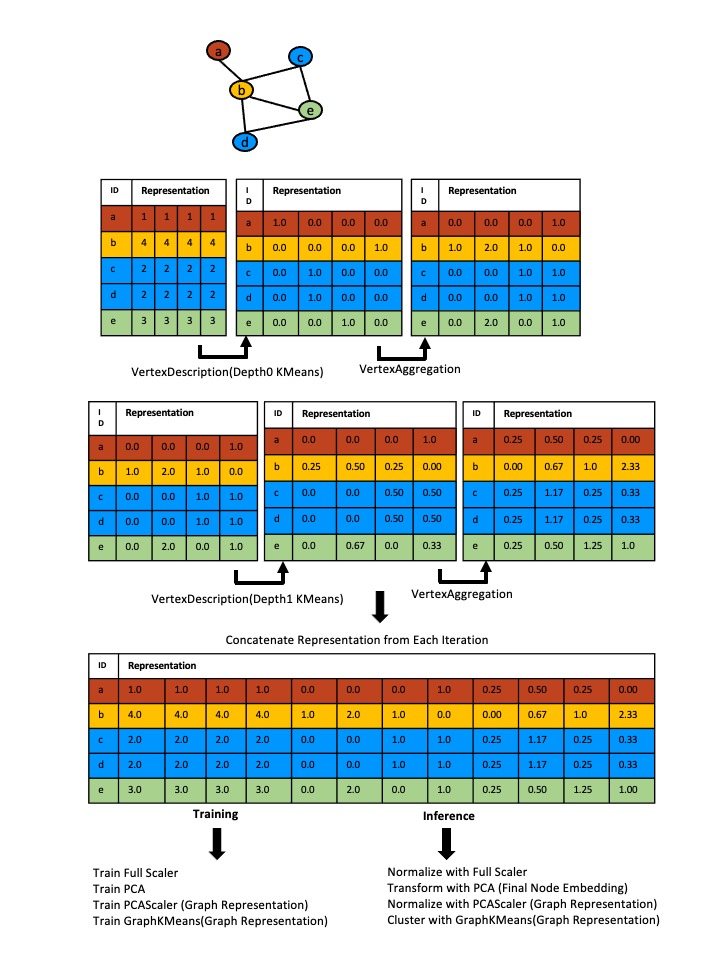}}
    \end{subfigure}
    \caption{Schematic of training Inferential SIR-GN. Graphs depicted represent simplified examples of random graphs used to train the model, and node representations show the result of each step of processing during training.}\label{fig:train:2}
\end{figure}
Inferential SIR-GN uses the same principle of node representation as that for the unmodified SIR-GN described in \cite{joaristi2021sir}. Briefly, each node's representation begins as that node's degree, then is updated iteratively by repeated clustering followed by aggregation of its neighbors. The previous SIR-GN model trained the KMeans and Scaler for every level of iteration using the actual dataset at the time that the node representation was computed. Our proposed method trains the KMeans offline on random graph data that is first processed to emulate the state of the target graph node representations at each level of iteration. For instance, the KMeans cluster centroids that will be used to cluster node representations at inference time for the 2-hop exploration of the graph will be trained on random graphs that have undergone two iterations of clustering followed by aggregation. Thus, the training data, though random, will be representative of the appearance of unseen graphs. This same method is also used to train scikit StandardScaler models for inference.
Another addition to this proposed method does not simply return the final node embedding at the last depth of exploration, but concatenates the embedding from each iteration. This produces a larger node representation that captures the evolution of the node's structural description as its neighborhood is explored. A PCA is trained on random graph data that has been fully processed and concatenated and so emulates the final node embedding that will result from inference. This pre-fitted PCA condenses the final embedding and prevents degradation of information as the node representation size increases.  
Inference is performed using the pre-trained KMeans clusters, StandardScalers, and PCA models. The result of of the process is a condensed structural node representation based upon the exploration of each node's neighborhood to a desired depth. Pre-training of the KMeans, PCA, and StandardScalers are the driving force for the inferential ability of the modified model we present. Because the fitting steps for the KMeans, Scalers, and PCA are removed from computation of the node representation, the time to generate node representations is vastly decreased. We will experimentally demonstrate that pre-training on random graphs still yields a model with excellent performance and efficiency.

The detailed description of the Inferential SIR-GN methodology will be broken into $training$ (Alg. \ref{alg:nodefit}) and $inference$ (Alg. \ref{alg:transform}) algorithms. Both rely upon the internal node clustering and aggregation that is highly similar to that carefully outlined in Joaristi and Serra. The methodology for training the KMeans cluster centroids, StandardScaler models, and PCA are highly redundant in terms of the node description clustering and aggregation. Therefore, we present a detailed algorithm for training the KMeans cluster centroids, and describe the training for the other models in the text, including any differences.  

Inferential SIR-GN has additionally been optimized to generate node representations for  directed graphs, along with both node and edge-labeled data. In addition, we created a model to produce graph representations. Each of these algorithms differ in only small ways, so the algorithm for undirected graphs will be described in detail, then differences from that model will be highlighted in our descriptions of the modifications. Full pseudocode descriptions for each can be found in the Appendix.

\subsection{Inferential SIR-GN for Undirected Graphs}

\subsubsection{Train for Undirected Graphs}
\begin{algorithm}[h!]
\caption{\textsf{Train Inferential SIR-GN UnGraph Node Representations} algorithm}\label{alg:nodefit}
\begin{algorithmic}[1]
\Function{TrainInferentialSIR-GN}{$m, c, p, d, g$}
    \State $KMeansPerLevel = \Call {TrainKMeans}{g, m, d, c}$\label{alg:nodefit:1}
    \State $ScalerPerLevel = \Call {TrainScaler}{g, m, d, c}$
    \State $ScalerFull = \Call {TrainScalerFull}{g, m, d, c}$
    \State $PCA = \Call{TrainPCA}{g, m, p, KMeansPerLevel}$\label{alg:nodefit:2}
    \State \Return $KMeansPerLevel, ScalerPerLevel, ScalerFull, PCA$\label{alg:nodefit:3}
\EndFunction
\Statex
\Function{TrainKMeans}{$g, m, d, c$}\label{alg:nodefit:4}
    \State $KMeansPerLevel = [ ]$
    \State $i = 0$
    \State $j=0$
    \While{$i <= d$}\label{alg:nodefit:19}
        \While{$j < g$}\label{alg:nodefit:5}
            \State $G1 = generateRandomGraph (m)$\label{alg:nodefit:6}
            \State Initialize a matrix \textit{$Emb \in \mathbb{Z}^{|V|\times c}$} to ones \label{alg:nodefit:7}
            \State $Emb = Emb \times deg$ \label{alg:nodefit:8}
            \ForAll{$KMeans$ in  $KMeansPerLevel$}\label{alg:nodefit:9}
                \State $Emb1 = Norm(Emb)$ \label{alg:nodefit:21}
                \ForAll{$u \in V$} \Comment{Vertex description loop}  \label{alg:nodefit:20}
                    \State $dV_u = \textit{CalcDist}(Emb1_u,KMeans)$ \label{alg:nodefit:10}
                    \State $DV_u = (\textit{Max}(dV_u)-dV_u)/(\textit{Max}(dV_u)-\textit{Min}(dV_u))$\label{alg:nodefit:11}
                    \State $DV_u = DV_u/\textit{Sum}(DV_u)$\label{alg:nodefit:12}
                \EndFor
                \ForAll{$u \in V$} \Comment{Aggregation loop} \label{alg:nodefit:13}
                    \State $CR_u = [0,..,0]$\label{alg:nodefit:14}
                    \ForAll{$n \in nbr(u)$} \label{alg:nodefit:15}
                        \State $CR_u = CR_u + DV_n$\label{alg:nodefit:16}
                    \EndFor
                \EndFor \label{alg:nodefit:23}
            \State $Emb = CR$\label{alg:nodefit:22}
            \EndFor
            \State $KMeans = MiniBatchKMeans.partial\_fit(Emb, c)$\label{alg:nodefit:17}
        \EndWhile
        \State append $KMeans$ to $KMeanPerLevel$\label{alg:nodefit:18}
    \EndWhile
    \State \Return $KMeanPerLevel$
 \EndFunction 
 \Statex
 \end{algorithmic}
\end{algorithm}

The training step of Inferential SIR-GN is generally accomplished by computing node representations in the same manner as that for inference, but where new random graphs are generated as input at every iteration, and used to train the KMeans, StandardScalers and PCA. Training of the KMeans cluster centroids and StandardScalers for later inference is accomplished in batches by repeating seven main steps:
\begin{itemize}
\item \textbf{(1)} Generate a random graph
\item \textbf{(2)} Initiate the node representation for each node as its degree 
\item \textbf{(3)} Use fitted K Means cluster centroids from previous iterations to create node descriptions of the graph for the current iteration
\item \textbf{(4)} Repeat steps 1-3, each time using a partial fit to train the StandardScaler
\item \textbf{(5)} Repeat steps 1-3, each time using a partial fit to train the MiniBatchKMeans
\item \textbf{(6)} Store the new KMeans cluster centroids and trained Scaler for use in subsequent iterations and inference.
\item \textbf{(7)} Repeat steps 1-6 until a desired depth of node exploration has been reached. The number of iterations chosen here will correspond to the k-hop neighborhood explored for each node.
\end{itemize}

To initialize, the user selects multiple hyperparameters for training Inferential SIR-GN. First, the user selects the number of random graphs that will be used for training ($g$), and the size of each random graphs ($m$) is input. The desired depth of exploration ($d$) is declared, and represents the number of iterations that will be performed, and the number of separate KMeans that will be fitted to generate node representations.
The user chooses a number of clusters ($c$) whose centroids will be fitted to the random graphs. We will demonstrate that a wide range of values are easily tolerated, and still produce good results. If a graph representation is to be generated (optional), a number of clusters ($w$) for a separate KMeans is indicated. The user also selects the number of PCA components ($p$) to which the final representation will be condensed. This number determines the final node representation size that is returned from Inferential SIR-GN.  
To begin, we generate a random graph (Alg \ref{alg:nodefit}, Line \ref{alg:nodefit:6}) in step (1). Our graph generator uses a random number to determine an edge percentage based upon the input number of nodes. Edges are added randomly between nodes until that edge percentage is reached. The initial representation for each node is set to a vector of size $c$ with all dimensions set as the node degree (Line \ref{alg:nodefit:7} -\ref{alg:nodefit:8}). When the list of trained KMeans is empty (as in depth/iteration zero), the Vertex Description and Vertex Aggregation loops are bypassed, and the node representation remains as the node degree (Line \ref{alg:nodefit:9}. Figure \ref{fig:train:1} shows a simple visual description of the training process for Inferential SIR-GN for node representation. In panel A an example graph is initialized using the node degree with three clusters. The KMeans cluster centroids for depth zero are fitted to a number of these random graph representation chosen by the user (Line \ref{alg:nodefit:17}). For simplicity, Figure \ref{fig:train:1} only shows a single random graph representing a set, however, an entirely separate set of random graphs is generated then processed in this manner independently to train a StandardScaler (a MinMaxScaler can be substituted) for depth zero. This $depth0$ scaler is used to transform the node representations generated to train $depth0$ KMeans cluster centroids. Importantly, training on different sets of random graphs enhances the capability of Inferential SIR-GN to generalize to a large variety of unseen graphs. The trained KMeans and StandardScaler are stored in a list for use in subsequent iterations and later inference (Line \ref{alg:nodefit:18}).
For the second (and any subsequent) layer of depth, new random graphs are generated, and the node representations are calculated using the KMeans trained in the previous iteration. Figure \ref{fig:train:1}B gives an example of this process, where nodes from a random graph (representing a training set) are first represented again by their degree. At this stage, however, a trained KMeans has been stored from the previous iteration, and so we use the cluster centroids to generate a vertex description for each node. Each dimension in the node representation represents a cluster, and the vertex description loop simply transforms the node degree vector into a vector of probabilities that the node is a member of each cluster. First, the node degree vector is normalized using a StandardScaler (Line \ref{alg:nodefit:21}). Following normalization, the vertex description loop is used to transform the representation for each node (Line \ref{alg:nodefit:20}). The loop converts the normalized node degree vector into a probability for each cluster that said node is a member of that cluster. First, the distance from each cluster centroid is computed, resulting in a distance vector of the node degree to each cluster(Line \ref{alg:nodefit:10}). Then each distance is subtracted from the maximum distance in the vector, and this difference is divided by the size of the range of distances for the vector. This results in a vector where the values are inversely proportional to the distance of the degree from the cluster(Line \ref{alg:nodefit:11}). Finally, this vector is normalized by the sum of its values, so that it totals to one (Line \ref{alg:nodefit:12}). 
After the vertex description loop, each node representation is a vector corresponding to its probability of membership in each cluster. Figure \ref{fig:train:1}B depicts the outcome of the vertex description loop when the number of clusters is equal to the number of different node degrees in the graph. This is unlikely in real world situations, even with random graphs, but lends ease of understanding. Here, after a single iteration, each node is fully a member of a cluster. 
The vertex aggregation loop (Line \ref{alg:nodefit:13}) sums for each cluster the probabilities associated with all of a node's neighbors (Line \ref{alg:nodefit:15}-\ref{alg:nodefit:16}). This results in an expected count of each neighbor per cluster as a node's final aggregated representation (Fig. \ref{fig:train:1}B). The embedding generated from the vertex description and aggregation loops is then normalized using the previously trained scaler for the current depth, then used to fit the next iteration of the KMeans cluster centroids (Line \ref{alg:nodefit:17}, Fig. \ref{fig:train:1}B). Early in the exploration, all nodes with the same degree (Fig. \ref{fig:train:1}B, nodes $b, c, d$) have the same representation. However, after the aggregation of the neighbors, different node representations result based on the number of neighbors with each degree. In this way, each iteration $k$ of clustering followed by aggregation creates node representations based upon exploration of the k-hop neighborhood of each node.
This continues successively to the user-designated depth, generating a set of scalers and cluster centroids where, at level $k$, both are fitted for the expected types of values that will result from $k$ rounds of clustering and aggregation. Figure \ref{fig:train:2} shows the result of exploration to a depth 2, where both the $depth0$ and $depth1$ KMeans cluster centroids and are used to generate a node representation. Normalization is left out for simplicity. Here, as in Figure \ref{fig:train:1}, nodes are fully discriminated after only a single round of clustering and aggregation of neighbors. For more complex graphs, usually a greater level of discrimination is reached at increased depth. However, even with larger graphs, nodes are well discriminated quickly, and we will show that increasing depth (and therefore training and inference time) is not necessary beyond a certain level. Importantly, nodes with the exact same structure (nodes $c$ and $d$ in Fig. \ref{fig:train:2}) result after any number of iterations. This was established in \cite{joaristi2021sir} via both proofs and experimentation, and as the internal vertex description and aggregation are identical here, we can safely assume those proofs hold.
As mentioned above, for inference, the structural representation for each iteration is concatenated to create a larger final node representation, which is normalized then condensed using a PCA. To train this PCA, an entirely separate set of random graphs is generated, and the pre-trained KMeans cluster centroids are used to compute iterative node descriptions in a manner identical to that depicted in Algorithm \ref{alg:nodefit}. The distinctive aspect of this training step is that each node representation for each depth is concatenated to previous iterations, so the node description grows with each depth level by a size $c$. Figure \ref{fig:train:2} shows an example final representation after a 2-hop neighborhood exploration. It is clear from the concatenated representation how information can be captured about the evolution of the node's structural representation by storing the result of each level of exploration. This full-sized embedding is used to train a StandardScaler, here referred to as FullScaler. To preserve the capability of Inferential SIR-GN to generalize to unseen data, an independent set of random graphs is processed (via the same concatenation-based methodology shown in Fig. \ref{fig:train:2}), and normalized using the FullScaler. The PCA is then trained on this data. This algorithm returns a model which consists of:
\begin{itemize}
\item \textbf{(1)} A set of KMeans cluster centroids, each corresponding to a depth of neighborhood exploration
\item \textbf{(2)} A set of fitted StandardScalers, each corresponding to a depth of neighborhood exploration
\item \textbf{(3)} A fitted StandardScaler that corresponds to the concatenated, pre-PCA condensed node representation
\item \textbf{(4)} A fitted PCA that corresponds to the concatenated node representation
\end{itemize}
 
 These will be used to train a graph representation model, if that is required. In addition, this model is used for inference, and generates final node representations for use in downstream machine learning tasks.
 As noted above, Algorithm \ref{alg:nodefit} depicts only the $TrainKMeans$ function in detail. However, Lines \ref{alg:nodefit:1}-\ref{alg:nodefit:2} show that independent functions are used to train the iterative scaler, the full scaler, and the PCA. Each of these functions is nearly identical to $TrainKMeans$. For the iterative scaler, where the KMeans is fitted and stored (Line \ref{alg:nodefit:17}-\ref{alg:nodefit:18}) instead a StandardScaler is fitted and stored. This same loop is performed to train the FUllScaler and PCA, independently, but no KMeans are fitted, and the resulting representation from each iteration (Line \ref{alg:nodefit:22}) is concatenated to produce a larger embedding used for training.

\subsubsection{Node Inference for Undirected Graphs}
\begin{algorithm}[h!]
\caption{\textsf{Inferential SIR-GN Node Transform} algorithm}\label{alg:transform}
\begin{algorithmic}[1]
\Function{$NodeTransform$}{$G, ScalerPerLevel, KMeansPerLevel, FullScaler, PCA$}
    \State $NodeRep = \Call{Transform}{G, ScalerPerLevel, KMeansPerLevel}$
    \State $NodeRepScale = FullScaler.transform(NodeRep)$ \label{alg:transform:1}
    \State $NodeRepPCA = PCA.transform(NodeRepScale)$\label{alg:transform:2}
    \State \Return $NodeRepPCA$
\EndFunction
\Statex
\Function{Transform}{$G, KMeansPerLevel$}\label{alg:transform:3}
    \State $EmbList = [ ]$
    \State Initialize a matrix \textit{$Emb \in \mathbb{Z}^{|V|\times c}$} to ones \label{alg:transform:4}
    \State $Emb = Emb \times deg$ \label{alg:transform:5}
    \ForAll{$KMeans$ in  $KMeansPerLevel$}\label{alg:transform:6}
        \State $Emb1 = Norm(Emb)$
        \State $DV = \Call{VertexDescription}{Emb1, KMeans}$\label{alg:transform:8}
        \State $CR = \Call{VertexAggregation}{G, DV}$\label{alg:transform:9}
        \State $Emb = CR$
        \State append $Emb$ to $EmbList$\label{alg:transform:7}
    \EndFor
    \State \Return $EmbList$
\EndFunction
\end{algorithmic}
\end{algorithm}
Training of Inferential SIR-GN returns a model that includes pre-trained KMeans cluster centroids for each level of neighborhood exploration, along with a pre-fitted StandardScaler for each. In addition, the model has a StandardScaler pre-fitted to the final concatenated node representations, and a PCA trained on the final, normalized embeddings. With this in hand, inference is accomplished by treating the unseen target graph in the same manner as the random graphs during training with respect to the vertex description and aggregation loops (Algorithm \ref{alg:transform}). For the target graph(s), the node degree begins the process (Line \ref{alg:transform:5}). The degree vector is normalized followed by distance calculations using $depth0$ cluster centroids. These distances are converted as previously described into probabilities of membership in each cluster, and aggregation is performed, also as previously described. For simplicity, the inference algorithm treats the vector description and aggregation loops detailed in Algorithm (\ref{alg:nodefit} as functions (Lines \ref{alg:transform:8}-\ref{alg:transform:9}). This vector is normalized using the $depth0$ scaler. This is repeated to previously-chosen depth (Line \ref{alg:transform:6}), node representations from each iteration are concatenated (Line \ref{alg:transform:7}. The representation is normalized using FullScaler (Line \ref{alg:transform:1}) followed by PCA transformation (Line \ref{alg:transform:2}). Figure \ref{fig:train:2} depicts the similarity between training and inference (normalization left out for clarity), where the KMeans for each depth are used to generate each node representation using the vertex description and aggregations. The figure also depicts that the concatenated node representations are transformed at inference time using the pre-trained scaler and PCA.  The final embedding is now ready for use in downstream machine learning tasks.

\subsection{Train for Graph Representation}
If graph classification tasks will be required, a graph representation model can be trained along with the node representation model, by adding training of one additional StandardScaler and KMeans. This is accomplished after the KMeans, IncrementalScalers, FullScaler, and PCA have all completed training. A full concatenated embedding is created from random graphs, then condensed using the PCA. This set is used to fit the final Scaler, referred to in Figure \ref{fig:train:2} as the PCAScaler. One last set of random graphs is generated, embedded, normalized and condensed with the PCA as above, and finally normalized with the PCAScaler. This set of representations is used to train a final KMeans, referred to hereafter as the graph KMeans, where the cluster centroids are fit to the full concatenated, condensed embedding (Fig. \ref{fig:train:2}). Once more, of note, though the figure for simplicity only shows one random graph carried through the process, this process repeats for a large number of random graphs, and independently for each depth-level KMeans, IncrementalScaler, as well as for the FullScaler and PCA. If no graph classification is needed, this step can be removed to shorten training time.

\subsection{Inference for Graph Representation}
At inference time, generating a graph representation occurs identically to node representation until a final, PCA-transformed node embedding is generated as described for for the Node Representation algorithm. At this point, one final round of normalization is performed, using the pre-fitted PCAScaler, then we generate node descriptions from distances to the GraphKMeans cluster centroids. The node description function is identical to that described previously, but rather than a node aggregation, a graph pooling function is performed. Graph pooling in other works is often accomplished via concatenation, summing or mean-pooling node descriptions. We propose a new method, which creates a structural pseudo-adjacency matrix that is node-order agnostic. 
The Inferential SIR-GN graph pooling method creates a matrix $w\times w$ matrix, where $w$ is the number of chosen graph clusters (Algorithm \ref{alg:graphpool}, Line \ref{alg:graphpool:1}). For every edge in the network, the representation for the participating nodes is multiplied such that the resulting matrix is $w\times w$ (Lines \ref{alg:graphpool:3}-\ref{alg:graphpool:4}), and these are summed over the entire graph. This creates a node-order agnostic matrix for the graph representation:
\begin{equation} 
M_{(i,j)} = \sum_{(u,v) \in E} r_{ui}\times r_{vj},
\end{equation} 
where $i$ and $j$ represent graph clusters and $r_{u,i}$ represents the distance of node $u$ to graph cluster centroid $i$. A matrix is generated for each edge, and all edges are summed, therefore the node order has no effect on the final matrix. This generates a matrix that is unique to a graph structure, and can be linearized to form a vector of features for use in graph classification tasks.

\begin{algorithm}[h!]
\caption{\textsf{Pooling for Graph Representation} algorithm}\label{alg:graphpool}
\begin{algorithmic}[1]
\Function{$NodePool$}{$G, Emb$}
     \State Initialize a matrix \textit{$GraphRep \in \mathbb{Z}^{w\times w}$} of zeros\label{alg:graphpool:1}
     \ForAll{$u \in V$}\label{alg:graphpool:3}
        \ForAll{$nbr(u)$}
            \State $GraphRep += Emb[u].reshape((1, w))*Emb[nbr(u)].reshape((w, 1))$\label{alg:graphpool:2}
        \EndFor\label{alg:graphpool:4}
    \EndFor
\EndFunction
\end{algorithmic}
\end{algorithm}

\subsection{Inferential SIR-GN for Directed Graphs}
For directed graphs, training and inference remain the same but for small modifications to the node description and aggregation steps performed at each iteration. Directed random graphs are generated for all model training steps. In Algorithm \ref{alg:nodefit}, where the degree vector is created for each node (Lines \ref{alg:nodefit:7}-\ref{alg:nodefit:8}) we initialize two separate vectors for directed graphs, one for the node's $in-degree$ and one for its $out-degree$. These two vectors are concatenated together, then clustering and node description calculations are performed. As with undirected graphs, the node description vector after this step is equal to the probability of its membership in each cluster, based upon distance to that cluster's centroid. Algorithm \ref{alg:diragg} shows the vertex aggregation loop that is the key difference in the model for directed graphs. At the aggregation step, a separate aggregation vector is calculated for each node based upon its $in$ neighbors and its $out$ neighbors (Lines \ref{alg:diragg:1}-\ref{alg:diragg:6}). These two vectors are then concatenated together for the next iteration of clustering (Line \ref{alg:diragg:7}). This unique aggregation loop is used at every step in both the training and inference algorithms where the standard loop is used for undirected graphs.

\begin{algorithm}[h!]
\caption{\textsf{Directed Vertex Aggregation} algorithm}\label{alg:diragg}
\begin{algorithmic}[1]
\Function{DirectedVertexAggregation}{G, DV}
    \ForAll{$u \in V$} \Comment{Aggregation loop}
        \State $CR_u^{in} = [0,..,0]$\label{alg:diragg:1}
        \ForAll{$n \in nbr^{in}(u)$}\label{alg:diragg:2}
            \State $CR_u^{in} = CR_u^{in} + DV_n$\label{alg:diragg:3}
        \EndFor
        \State $CR_u^{out} = [0,..,0]$\label{alg:diragg:4}
        \ForAll{$n \in nbr^{out}(u)$}\label{alg:diragg:5}
            \State $CR_u^{out} = CR_u^{out} + DV_n$\label{alg:diragg:6}
        \EndFor
    \EndFor
    \State $PR = concatenate(CR^{in},CR^{out})$ \label{alg:diragg:7}
\EndFunction
\end{algorithmic}
\end{algorithm}

For graph representation on directed graphs, the above-described node representations are generated, followed by the standard graph training and inference methods. 

\subsection{Inferential SIR-GN for Knowledge Graphs}
Knowledge graphs differ from standard graph data in that the nodes and edges may both be heterogeneous. This requires that we are able to incorporate both node and edge labels into the node embeddings generated by Inferential SIR-GN. During training of the model, random graphs are generated along with random oneHotEncoded node or edge labels. The user selects the desired number of labels to be used for training. For strictly node-labeled data, node labels are concatenated to the existing representation at each iteration upstream of clustering and aggregation. Because a node's type will be a considered for fitting into each cluster, two nodes with identical neighborhoods (structure), but with different node type, will have different resultant embeddings.
For node-plus-edge labeled data, the model separates the node embeddings by edge type with a modified aggregation loop (Alg. \ref{alg:edgelab}, which generates a node embedding of size $c \ times \ nel$. The node's neighbors of each edge type are then summed into their designated "section" of the embedding. For this algorithm, it is important to consider the approximate number of node labels that will be encountered in the inference target graph when selecting this value for training. Models trained on a greater number of node and edge labels are tolerant to inference data with fewer labels, however, there cannot be a greater number of labels in inference than used in training. This is simply because, if the number of labels exceeds the size of the training node representation, there will be a greater number dimensionality of the initialized node vectors during inference than the dimensionality of the random data used to fit the KMeans cluster centroids. This results in an unavoidable error. To prevent this, if the model will be used on multiple different graphs, use the largest number of labels that will be encountered for training. We will show that increasing the node labels by a factor of 10 in the training phase yielded similar performance as the training with the exact number of node and edge labels in the inference data.
\begin{table}[t]
\centering
\caption{Notations used in Algorithm \ref{alg:nodefit}.}\label{tab:notations:3}
\begin{tabular}{cc}
\hline
Notation        & Description                  \\
\hline
$nnl$        & The number of node labels chosen for training  \\ 
$nel$        & The number of edge labels chosen for training \\
        \\ \hline
\end{tabular}%
\end{table}

\begin{algorithm}[h!]
\caption{\textsf{Modified Vertex Aggregation} algorithm}\label{alg:edgelab}
\begin{algorithmic}[1]
\Function{VertexAggregationLabels1}{$G, DV$}\label{alg:edgelab:7}
    \ForAll{$u \in V$} \Comment{Aggregation loop} \label{alg:edgelab:1}
        \State $CR_u = [0,..,0]$\label{alg:edgelab:2}
        \ForAll{$v \in nbr(u)$} \label{alg:edgelab:3}
            \State $el = edgeLabel(u, v)$ \label{alg:edgelab:4}
            \State $CR_{u, el} = el \times n$ \label{alg:edgelab:5}
            \State $CR_{u, el} = CR_{u, el} + DV_v$\label{alg:edgelab:6}
        \EndFor
    \EndFor
    \State \Return $CR$
\EndFunction
\end{algorithmic}
\end{algorithm}

\section{Time Complexity}
Time complexity will be described separately for training and inference, with inference described for node representation generation and graph representation generation. Training for graph representation requires the models for node representation, therefore that training is performed concurrently. 
\subsection{Training}
As described above, for model training the user selects the number of synthetic random graphs $g$ and their size $|V|=m$, the number of Kmeans clusters to be used for both the node ($c$) and graph representations ($w$), the number of final PCA components $w$(determines the final representation size), and the depth of exploration $d$.
\begin{theorem}
Given $g$ synthetic training graphs $G_t=<V,E>$ and $d$ the exploration depth, model training runs in $O(g\cdot d\cdot (f(|V|) + dc|E| + t|V|c^2+|V|c +d|V|c^2))$ where $f(|V|)$ is the time to generate random graphs, $t$ is the maximum number of MiniBatchKmeans iterations and $c$ is the number of Kmeans clusters. 
\end{theorem}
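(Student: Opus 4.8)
The plan is to read the running time directly off the nested-loop structure of Algorithm~\ref{alg:nodefit}, bound the cost of processing a single synthetic graph at a single depth level, and then multiply by the loop multiplicities. The outermost \texttt{while} loop (Line~\ref{alg:nodefit:19}) iterates over depth and contributes a factor $O(d)$, while the enclosed \texttt{while} loop (Line~\ref{alg:nodefit:5}) iterates over the $g$ synthetic graphs and contributes a factor $O(g)$. It therefore suffices to show that the body of the graph loop---generating one random graph, recomputing its current embedding, and performing one \texttt{partial\_fit}---costs $O(f(|V|) + dc|E| + t|V|c^2 + |V|c + d|V|c^2)$, after which the claimed bound follows by multiplication.

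First I would account for a single pass through the graph-loop body. Random graph generation costs $f(|V|)$ by definition (Line~\ref{alg:nodefit:6}), and initializing the degree-based embedding $Emb\in\mathbb{Z}^{|V|\times c}$ costs $O(|V|c)$ (Lines~\ref{alg:nodefit:7}--\ref{alg:nodefit:8}). The key observation is that at depth level $i$ the list \texttt{KMeansPerLevel} holds exactly $i$ cluster models, so the \textbf{for all} loop of Line~\ref{alg:nodefit:9} executes at most $d$ times. Within each such iteration, the vertex-description loop (Lines~\ref{alg:nodefit:20}--\ref{alg:nodefit:12}) computes, for every node, the distance from its $c$-dimensional embedding to each of the $c$ centroids (Line~\ref{alg:nodefit:10}), which is $O(c^2)$ per node and hence $O(|V|c^2)$ in total, dominating the $O(|V|c)$ rescaling on Lines~\ref{alg:nodefit:11}--\ref{alg:nodefit:12}. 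The aggregation loop (Lines~\ref{alg:nodefit:13}--\ref{alg:nodefit:23}) adds one $c$-dimensional vector per incident edge, so its cost is $\sum_{u\in V}|N(u)|\cdot c = O(|E|c)$. Accumulating these two costs over the $\le d$ iterations of the Line~\ref{alg:nodefit:9} loop yields $O(d|V|c^2 + dc|E|)$. Finally, \texttt{MiniBatchKMeans.partial\_fit} on Line~\ref{alg:nodefit:17} clusters $|V|$ points of dimension $c$ into $c$ clusters over at most $t$ iterations, costing $O(t|V|c^2)$. Summing the five contributions gives the per-graph bound, and the leading factor $O(g\cdot d)$ from the two outer loops completes the estimate for \textsc{TrainKMeans}.

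To finish I would argue that the full procedure (Lines~\ref{alg:nodefit:1}--\ref{alg:nodefit:2} of \textsc{TrainInferentialSIR-GN}) has the same asymptotic cost. The routines \textsc{TrainScaler} and \textsc{TrainScalerFull} share the identical embedding-computation loops and differ only in replacing the \texttt{partial\_fit} of a $k$-means model by that of a \texttt{StandardScaler}, whose per-batch cost is dominated by the terms already present; \textsc{TrainPCA} likewise reuses the same loops. Since each auxiliary routine is invoked a constant number of times and is bounded termwise by the \textsc{TrainKMeans} expression, the overall cost is a constant multiple of the \textsc{TrainKMeans} cost, which establishes the theorem.

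The hard part will be twofold. First, one must justify the $c^2$ factors carefully: they are genuine precisely because the embedding dimension equals the number of centroids $c$, so every node-to-centroid distance is computed in $\mathbb{R}^c$ and the clustering operates on $c$-dimensional points---a miscount here would perturb both the $d|V|c^2$ and $t|V|c^2$ terms. Second, the PCA-training step requires the most delicate treatment, since fitting PCA on the concatenated embedding of growing dimension $dc$ could in principle introduce a term (scaling with $d^2c^2$ or a cubic factor in $dc$) not appearing in the stated bound; the argument must verify that, for the fixed number of components $p$ and the sample count tied to $|V|$ and $g$, this cost is absorbed into the listed terms rather than dominating them.
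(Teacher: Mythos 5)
Your proposal is correct and reaches the stated bound, but it takes a genuinely different route from the paper. The paper itemizes the cost by trained \emph{component} (random-graph generation, aggregation, KMeans fits, Scaler fits, PCA fit), summing the aggregation work as $g\cdot\sum_{l\le d} l = g\cdot d(d+1)/2$ per model family and then collecting terms; you instead bound the cost of one pass through the graph-loop body and multiply by the $O(g\cdot d)$ loop multiplicities, bounding the inner \texttt{KMeansPerLevel} loop by $d$ rather than by the exact level index $l$. Both give the same asymptotics. Your accounting is in one respect sharper than the paper's: you correctly charge the vertex-description step (Line~\ref{alg:nodefit:10}) at $O(|V|c^2)$ per level, since each of the $|V|$ nodes computes distances from a $c$-dimensional point to $c$ centroids, whereas the paper folds Lines~\ref{alg:nodefit:20}--\ref{alg:nodefit:23} into a single $O(c|E|)$ term; your extra $d|V|c^2$ per-graph contribution is nevertheless covered by the last term of the stated bound. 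The one piece you defer --- the PCA --- does work out, but not by being \emph{dominated} termwise as you suggest for the scalers: the IncrementalPCA is fit on the concatenated representation of dimension $r=cd$ at cost $O(|V|r^2)=O(|V|c^2d^2)$ per graph, which exceeds the per-graph $d|V|c^2$ term by a factor of $d$; it is saved precisely because \textsc{TrainPCA} loops only over the $g$ graphs once rather than over $g\cdot d$ graph instances, so its total cost $O(g|V|c^2d^2)$ equals the $g\cdot d\cdot(d|V|c^2)$ term exactly. That term is therefore tight, and your argument should make the missing factor-of-$d$ trade-off explicit rather than appealing to a uniform ``constant multiple'' of the \textsc{TrainKMeans} cost.
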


\begin{proof}
The training Algorithm separately trains a Kmeans and Scaler for each level of depth $d$, followed by a single KMeans for the graph representation, a PCA, and two additional Scalers, one each for pre-and post-PCA. Each of these models are partially fit using $g$ synthetic random graphs, each with $|V|= m$ nodes. We describe the complexity of each component of the training:

\begin{itemize}
\item \textbf{Generating Random Graphs} We will assume that our graph generator has time complexity $O(f(|V|))$. We create $g$ number of graphs for each Scaler and KMeans for each depth, for a total of $g\cdot d\cdot 2$, then an additional $g$ graphs each for the pre- and post-PCA Scalers, and $g$ graphs to train the PCA. The combined complexity to create all synthetic random graphs is  $O(f(|V|)\cdot g\cdot (3 + 2d)))$

\item \textbf{(Node Representation Aggregation)} The node aggregation step in Algorithm \ref{alg:nodefit} Lines \ref{alg:nodefit:20} - \ref{alg:nodefit:23} aggregates for each node in $V$ its neighbor vectors, for a complexity of $O(c\cdot |E|)$. This is performed for each of $g$ generated random training graphs. To calculate the complexity of node aggregation, we observe that the nodes are aggregated for each KMeans $g\cdot l$ times, where $l$ is the current level of exploration. As $l$ increases incrementally with depth of exploration, up to level $d$, the total number of occurrences of node aggregation for all depth-level Kmeans is $g\cdot (d(d+1))/2$. This is repeated exactly to train the IncrementalScaler(s). The FullScaler and PCAScaler each run $g\cdot d$ node aggregations, along with the graph KMeans and PCA training. This sums together for a total complexity for the entire training of $O(c\cdot |E|\cdot g\cdot (d^2+5d)) $

\item \textbf{(Training KMeans)} For each MiniBatchKMeans partial fit, the standard complexity is $O(t\cdot c\cdot |V|\cdot r)$ where $t$ is the number of clustering iterations, $c$ is the number of clusters, $|V|$ is the number of nodes, and $r$ is the dimension of the vector. For our algorithm, the vector size is the same as the number of clusters, making the complexity $O(t\cdot |V|\cdot c^2)$. A Kmeans is trained for each level of depth $d$. An additional Kmeans is trained for the graph representation, also on $g$ graphs, so that the total complexity for Kmeans training becomes $O((d+1)\cdot g\cdot t\cdot |V|\cdot c^2)$. 

\item \textbf{(Training Scalers)} For each level of depth $d$, an individual IncrementalScaler is trained on $g$ graphs, for a cost of $O(d\cdot g\cdot c\cdot |V|)$. The FullScaler is also trained on $g$ graphs, with $d\cdot c$ representation size, for a cost of $O(g\cdot d\cdot c\cdot |V|)$ The PCAScaler is trained on $g$ random graphs, with $p$ representation size, for a cost of $O(g\cdot p\cdot |V|)$. Notably, we have found the best performance of our model when the number of PCA components is less than 5X the number of clusters for one iteration. In fact, the models used in this work have PCA components equal to the number of clusters in one iteration, that is, $p=c$. Considering this, we can combine and simplify the complexity for  training all scalers is $O((2d+1)\cdot g\cdot |V|\cdot c)$.

\item \textbf{(Training PCA)} An IncrementalPCA is trained on a concatenated node representation of size $c\cdot d$. The standard time complexity for PCA is $O(|V|\cdot r^2)$ where $r$ is the dimension of the representation. In our case, $r=c\cdot d$, so the cost of the PCA is $O(g\cdot |V|\cdot (c\cdot d)^2)$.  
\end{itemize}

By summing all the contributing components, we calculate a training cost of $O(g((2d+3)(f(|V|))+(d^2+5d)(c\cdot |E|)+(d+1)(t\cdot |V|\cdot c^2)+(2d+1)(|V|\cdot c)+c^2d^2\cdot |V|)$. Simplifying the $d$ terms, we get a final computational complexity of $O(g\cdot d\cdot (f(|V|) + dc|E| + t|V|c^2+|V|c +d|V|c^2))$  

\end{proof}

It is important to note that the model used throughout this work was trained with $g=200$, $|V|=5000$. A depth of $d=10$ was chosen with $c=100$, and a final node representation size (PCA components) $p=100$. For the number of KMeans iterations, we used the default parameter of $t=100$ for all training. We will demonstrate that training on graphs of this size is sufficient to obtain excellent capture of a nodes structural information into a representation, even on real-world graphs of vastly larger size. 

\subsection{Inference}
\subsubsection{Node Representation}
\begin{theorem}
Assuming a single graph $G=<V,E>$ and $d$ the exploration depth, node representation inference runs in $O(d\cdot c\cdot(|V|+|E|))$ where $c$ is the number of Kmeans clusters. 
\end{theorem}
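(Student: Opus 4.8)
The plan is to decompose the inference procedure of Algorithm~\ref{alg:transform} into three stages --- the iterative \emph{Transform} loop, the \emph{FullScaler} normalization of Line~\ref{alg:transform:1}, and the \emph{PCA} projection of Line~\ref{alg:transform:2} --- bound each stage separately, and sum. The key simplification relative to the training bound is that inference reuses the very same vertex-description and vertex-aggregation routines but performs no model fitting: there is no \texttt{partial\_fit}, no centroid update, and no PCA fit, only stored-model evaluation on the single target graph $G$. I would therefore recycle the per-level aggregation accounting from the training analysis and simply drop the KMeans- and PCA-\emph{fitting} terms that carried the $t|V|c^2$ and $(cd)^2$ factors there.

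First I would cost one pass of the loop body over \texttt{KMeansPerLevel} (Lines~\ref{alg:transform:6}--\ref{alg:transform:7}). The normalization $Norm(Emb)$ rescales the length-$c$ vector of each of the $|V|$ nodes, costing $O(c|V|)$. The call \emph{VertexDescription} (Line~\ref{alg:transform:8}) produces, for each node, a cluster-membership vector from its distances to the $c$ stored centroids followed by the max/range/sum normalizations of Lines~\ref{alg:nodefit:10}--\ref{alg:nodefit:12}; I would charge this at $O(c|V|)$ per level under the same convention used in the training analysis. The call \emph{VertexAggregation} (Line~\ref{alg:transform:9}) adds, for each node, the length-$c$ description vectors of its neighbors, which summed over all adjacency lists is $O(c|E|)$, exactly the aggregation term from training. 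One level is thus $O(c(|V|+|E|))$, and since the loop executes once per stored KMeans, i.e. $d$ times, the entire \emph{Transform} loop is $O(d\,c\,(|V|+|E|))$.

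Next I would absorb the two post-processing transforms into this bound. After the loop the per-node embeddings from all $d$ levels are concatenated into a length-$dc$ vector (Line~\ref{alg:transform:7}), so \emph{FullScaler} touches $dc$ coordinates for each of the $|V|$ nodes, giving $O(dc|V|)$. The \emph{PCA} transform projects each length-$dc$ vector onto the $p$ principal components; since $p$ is the fixed output representation size (taken as $p=c$ in the training analysis), this is $O(dc|V|)$ as well. Both terms fall inside the $O(dc|V|)$ portion of the loop bound, so summing all three stages leaves the stated $O(d\cdot c\cdot(|V|+|E|))$.

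The step I expect to be the main obstacle is the vertex-description / distance computation. A literal reading evaluates $c$ distances in $\mathbb{R}^{c}$ per node, i.e. $O(c^2|V|)$ per level, which would inject an extra factor of $c$ on the vertex term and break the claimed bound. To reach $O(d\,c\,(|V|+|E|))$ I must make explicit the same charging convention already used (implicitly) in the training theorem, where per-node description work is counted linearly in $c$ rather than quadratically, and I must likewise pin down $p=c$ so the PCA transform stays within $O(dc|V|)$. Stating these two bookkeeping conventions cleanly is the delicate part; once they are fixed, the remainder is a routine summation of the per-level costs.
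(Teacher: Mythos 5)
Your proposal follows essentially the same route as the paper's proof, which simply charges $O(c\cdot|V|)$ per level for the vertex description and $O(c\cdot|E|)$ for the neighbor aggregation and multiplies by the $d$ iterations of the loop to obtain $O(d\cdot c\cdot(|V|+|E|))$. You go somewhat further than the paper in two respects that are worth noting: the paper's proof does not account for the FullScaler and PCA transforms at all (your $O(dc|V|)$ bounds for these, under $p=c$, correctly show they are absorbed), and the paper silently asserts the $O(c|V|)$ description cost without addressing the point you raise --- that a literal evaluation of $c$ centroid distances in $\mathbb{R}^{c}$ per node is $O(c^{2}|V|)$ per level, so the stated bound really does depend on the linear-in-$c$ charging convention you make explicit.
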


\begin{proof}
Algorithm \ref{alg:transform} line \ref{alg:transform:6} shows that for each of $d$ stored Kmeans from the training step, we complete the vertex description and aggregation steps (Lines \ref{alg:transform:8}-\ref{alg:transform:9}). These are described as functions in this description for brevity, but found in detail as loops in Algorithm \ref{alg:nodefit} Lines \ref{alg:nodefit:20}-\ref{alg:nodefit:12} (vertex description loop) and Lines \ref{alg:nodefit:13}-\ref{alg:nodefit:16} (vertex aggregation loop). The vertex description calculation is $O(c\cdot |V|)$ and the aggregation of the neighbors has a cost of $O(c\cdot |E|)$.

By summing all the contributing components, we calculate a total cost of $O(d\cdot c\cdot(|V|+|E|))$. 

\end{proof}

\subsubsection{Graph Representation}
To generate graph representations, node representations are first calculated as above. Node and graph representations can be calculated together, saving the cost described above, but here we will assume that graph representations are being calculated independently and so will include the time for node representation calculation.
\begin{theorem}
For $g$ graphs $G=<V,E>$ and an exploration $d$ the exploration depth, graph representation inference runs in $O(g\cdot d\cdot c\cdot(|V|+|E|))$ where $c$ is the number of Kmeans clusters. 
\end{theorem}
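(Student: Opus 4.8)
The plan is to reduce this bound to the already-established node-representation inference cost and then argue that the extra graph-pooling machinery contributes only lower-order terms. First I would recall from the graph-representation inference description that the pipeline proceeds in two phases: (i) for each of the $g$ input graphs, compute the final PCA-condensed node embedding exactly as in the node-representation algorithm, and (ii) apply the graph-specific steps---normalization by the pre-fitted PCAScaler, a vertex description against the $w$ GraphKMeans centroids, and the pseudo-adjacency pooling of Algorithm \ref{alg:graphpool}. Phase (i) is handled directly by the node-representation inference theorem: each graph costs $O(d\cdot c\cdot(|V|+|E|))$, so summing over the $g$ graphs gives $O(g\cdot d\cdot c\cdot(|V|+|E|))$, which is precisely the claimed bound.

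The remaining work is to show phase (ii) is dominated by phase (i). I would bound each of its three components separately. The PCAScaler normalization touches every node's $p$-dimensional representation, costing $O(g\cdot p\cdot|V|)$; using the convention $p=c$ adopted earlier this is $O(g\cdot c\cdot|V|)$. The GraphKMeans vertex description, computed in the same manner as the node-level description but against $w$ centroids, costs $O(g\cdot w\cdot|V|)$ under the distance-counting convention already used for the node inference bound. The pooling step iterates over every edge and accumulates a $w\times w$ outer product per edge, giving $O(g\cdot w^{2}\cdot|E|)$.

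The main obstacle is accounting for the $w$ and $w^{2}$ factors, since $w$ does not appear in the final bound. I would resolve this by treating the number of graph clusters $w$ as a fixed hyperparameter that is constant with respect to the graph size $|V|$ and $|E|$ (it is user-selected and independent of the target graph). Under this convention the three phase-(ii) costs become $O(g\cdot c\cdot|V|)$, $O(g\cdot|V|)$, and $O(g\cdot|E|)$ respectively, each of which is subsumed by the phase-(i) term $O(g\cdot d\cdot c\cdot(|V|+|E|))$ whenever $d\ge 1$. Summing the two phases and discarding the dominated terms then yields the stated complexity $O(g\cdot d\cdot c\cdot(|V|+|E|))$. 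The only point requiring care is making the constant-$w$ assumption explicit; once stated, the rest is a routine domination argument mirroring the node-representation proof.
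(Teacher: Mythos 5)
Your decomposition matches the paper's: both proofs charge the node-representation phase at $O(d\cdot c\cdot(|V|+|E|))$ per graph via the preceding theorem, then argue that the graph-specific steps (the extra clustering against the GraphKMeans centroids and the pseudo-adjacency pooling) are lower order. Where you diverge is in how the graph-cluster count $w$ is eliminated. The paper bounds the pooling loop as $O(w\cdot|E|)$ and then assumes $w=c$ (or within an order of magnitude), so the pooling term $c\cdot|E|$ is absorbed into $d\cdot c\cdot(|V|+|E|)$. You instead count the pooling as $O(w^{2}\cdot|E|)$ --- which is actually the more faithful reading of the algorithm, since each edge contributes a full $w\times w$ outer product that must be accumulated entrywise --- and then dispatch $w$ by declaring it a constant independent of the graph. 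Each choice buys something: the paper's $w=c$ convention keeps $w$ on the same footing as the other hyperparameters $c$ and $d$ (which do appear in the bound), but only reaches the stated complexity because of its looser $O(w\cdot|E|)$ count; under your sharper $O(w^{2}\cdot|E|)$ accounting, setting $w=c$ would leave a $c^{2}\cdot|E|$ term that is \emph{not} dominated by $d\cdot c\cdot(|V|+|E|)$ unless $c\le d$, which fails for the paper's own settings ($c=100$, $d=10$). Your constant-$w$ assumption repairs this, and you are right to flag that it must be stated explicitly, but note the mild inconsistency that $c$ and $d$ are equally user-selected constants yet are retained in the bound. You also explicitly cost the PCAScaler normalization, which the paper's proof omits; it is harmless either way.
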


\begin{proof}
To generate the graph representation, node representations are first calculated as above. The cost of this is $O(d\cdot c\cdot(|V|+|E|))$. An additional clustering step is performed using the single graph Kmeans (no depth), for an additional $O(c\cdot(|V|)$. Following the final clustering the aggregation loop in Algorithm \ref{alg:graphpool} lines \ref{alg:graphpool:3} - \ref{alg:graphpool:4} calculates a structural pseudo-adjacency matrix, for a complexity of $O(w\cdot |E|)$. 

Summing the steps, we obtain a total cost of $O(d\cdot c\cdot(|V|+|E|) + c\cdot |V| + w\cdot |E|)$. Notably, in the model used for much of this work, we set used the same number of Kmeans clusters for node and graph clustering, such that $w=c$. This shows good performance, and is recommended, so we will for these calculations assume that $c=w$, or is within an order of magnitude. This allows the complexity to be simplified to $O(d\cdot 2\cdot c\cdot(|V|+|E|))$, which further simplifies to $O(d\cdot c\cdot(|V|+|E|))$
\end{proof}

The time complexity for directed graphs is the same as that for undirected graphs, as is that for labeled graph data.

\section{Experimental Evaluation}
\subsection{Experimental Setup}
Our experiments use an AMD Operon Processor with 2.5GHz and 256 GB of Memory.
\subsubsection{Datasets}
For our experiments, we chose a large diversity of graphs with respect to size, from hundreds to millions of nodes, and edges in the low thousands to the hundreds of millions. As these commonly used large graphs are not suited for a structural-role-based node classification, we utilize them for time-of-inference to demonstrate scalability, and assess the ability of a single pre-trained Inferential SIR-GN to capture the structural information from a large diversity of real-world data with respect to graph size, average node degree, etc. Table \ref{tab:datasets} shows a summary of the datasets used.

\begin{table*}[t] \footnotesize \center %
\begin{tabular}{ccccccccccccc|}                                                                                                                                                                   \\ \hline                                                                                  
\multicolumn{1}{c|}{Dataset}           & \multicolumn{1}{c|}{Nodes |V|}           & \multicolumn{1}{c|}{Edges |E|}          & \multicolumn{1}{c|}{Type}           & \multicolumn{1}{c|}{Experiment}       \\ \hline
\multicolumn{1}{c|}{Brazil}     & \multicolumn{1}{c|}{131} & \multicolumn{1}{c|}{1038} & \multicolumn{1}{c|}{Undirected}          & \multicolumn{1}{c|}{Regression, Node Classification, Hyperparameters} \\ \hline
\multicolumn{1}{c|}{Europe}           & \multicolumn{1}{c|}{399}           & \multicolumn{1}{c|}{5995}          & \multicolumn{1}{c|}{Undirected}           & \multicolumn{1}{c|}{Regression, Node Classification, Hyperparameters}       \\ \hline
\multicolumn{1}{c|}{USA}           & \multicolumn{1}{c|}{1190}           & \multicolumn{1}{c|}{13599}          & \multicolumn{1}{c|}{Undirected}           & \multicolumn{1}{c|}{Regression, Node Classification, Hyperparameters}       \\ \hline
\multicolumn{1}{c|}{BlogCatalog}           & \multicolumn{1}{c|}{10,312}           & \multicolumn{1}{c|}{333,983}          & \multicolumn{1}{c|}{Undirected}           & \multicolumn{1}{c|}{Regression, Scalability}       \\ \hline
\multicolumn{1}{c|}{YouTube}           & \multicolumn{1}{c|}{1,138,499}           & \multicolumn{1}{c|}{2,990,443}          & \multicolumn{1}{c|}{Undirected}           & \multicolumn{1}{c|}{Regression, Scalability}       \\ \hline
\multicolumn{1}{c|}{Orkut}           & \multicolumn{1}{c|}{3,072,441}           & \multicolumn{1}{c|}{117,185,083}          & \multicolumn{1}{c|}{Undirected}           & \multicolumn{1}{c|}{Scalability}       \\ \hline
\multicolumn{1}{c|}{Wiki}           & \multicolumn{1}{c|}{1.79M}           & \multicolumn{1}{c|}{28.5M}          & \multicolumn{1}{c|}{Directed}           & \multicolumn{1}{c|}{Scalability}       \\ \hline
\end{tabular}

\caption{Experimental Datasets}
\label{tab:datasets}
\end{table*}
 
\begin{itemize}
\item \textbf{Air-traffic networks} : This set of networks is commonly used to evaluate node structural role classification \cite{ribeiro2017struc2vec}\cite{structuralanonwalk2019}\cite{strap2019}\cite{arope2018}. For these undirected graphs, nodes represent airports and edges indicate the existence of commercial air traffic between them. We will use three different sized air-traffic networks: 
\begin{itemize}
\item \textbf{Brazilian air-traffic network}
\item \textbf{European air-traffic network}
\item \textbf{American air-traffic network}

\end{itemize}
\item \textbf{BlogCatalog} \cite{grover2016node2vec}: This is an undirected network of social relationships of users of the BlogCatalog website. 
\item \textbf{YouTube} \cite{leskovec2016snap}: This is an undirected network of video-sharing social groups of users of the YouTube website. 
\item \textbf{Wiki} \cite{leskovec2016snap}: This is a directed network of hyperlinks on Wikipedia. The ground truth communities are article categories. 
\item \textbf{Orkut} \cite{leskovec2016snap}: This is an undirected network of the Orkut online social network. Users can form and join groups, which are ground truth communities. 

Node classifications aim to predict whether a user is a member of a community, a multilabel classification task. Structural node representations are not good predictors of communities, therefore we use Orkut, YouTube, BlogCatalog, and Wiki purely as large real-world datasets to show time of inference and ability to capture node structures.
\item \textbf{MUTAG} \cite{mutag}: This is set of 188 graphs with heterogeneous nodes and edges. There are seven node types and 4 edge types. Graphs represent chemical structures, node types refer to atoms, and edge types to different chemical bonds. The ground truth for this data is each chemical's ability to cause DNA mutations.
\end{itemize}

\subsection{Scalability}

\begin{figure*}[h!]
    \includegraphics[width=\linewidth]{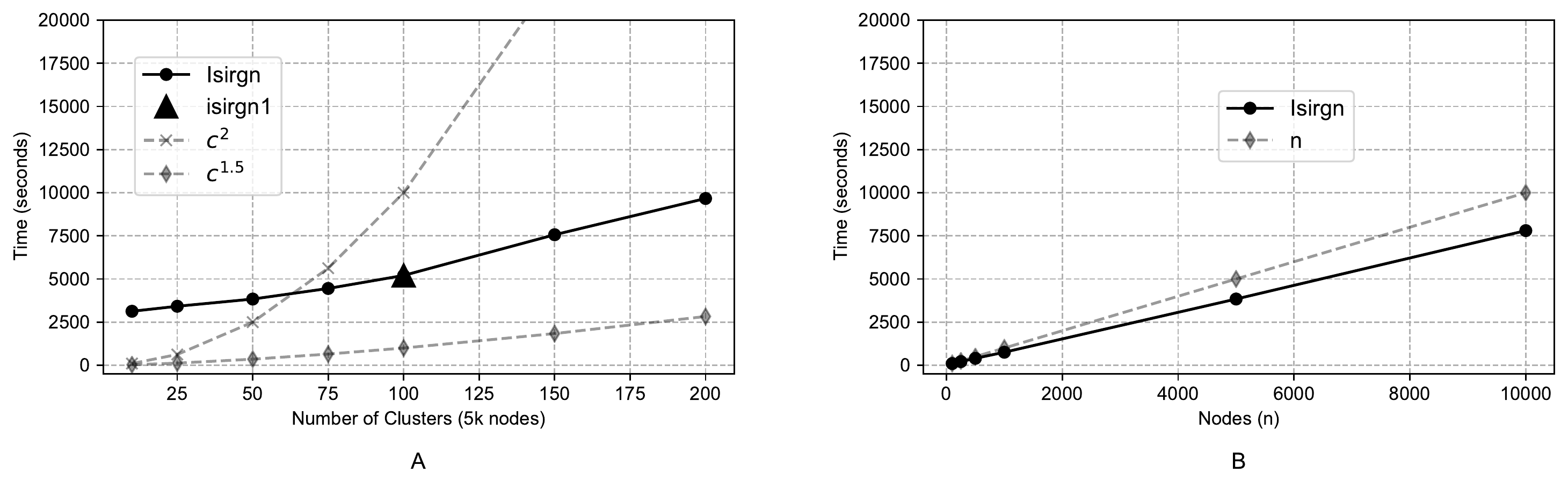}
  \caption{Effect of user-chosen hyperparameters on model training time.}
  \label{fig:time:1}
\end{figure*}

\begin{figure*}[h!]
    \includegraphics[width=\linewidth]{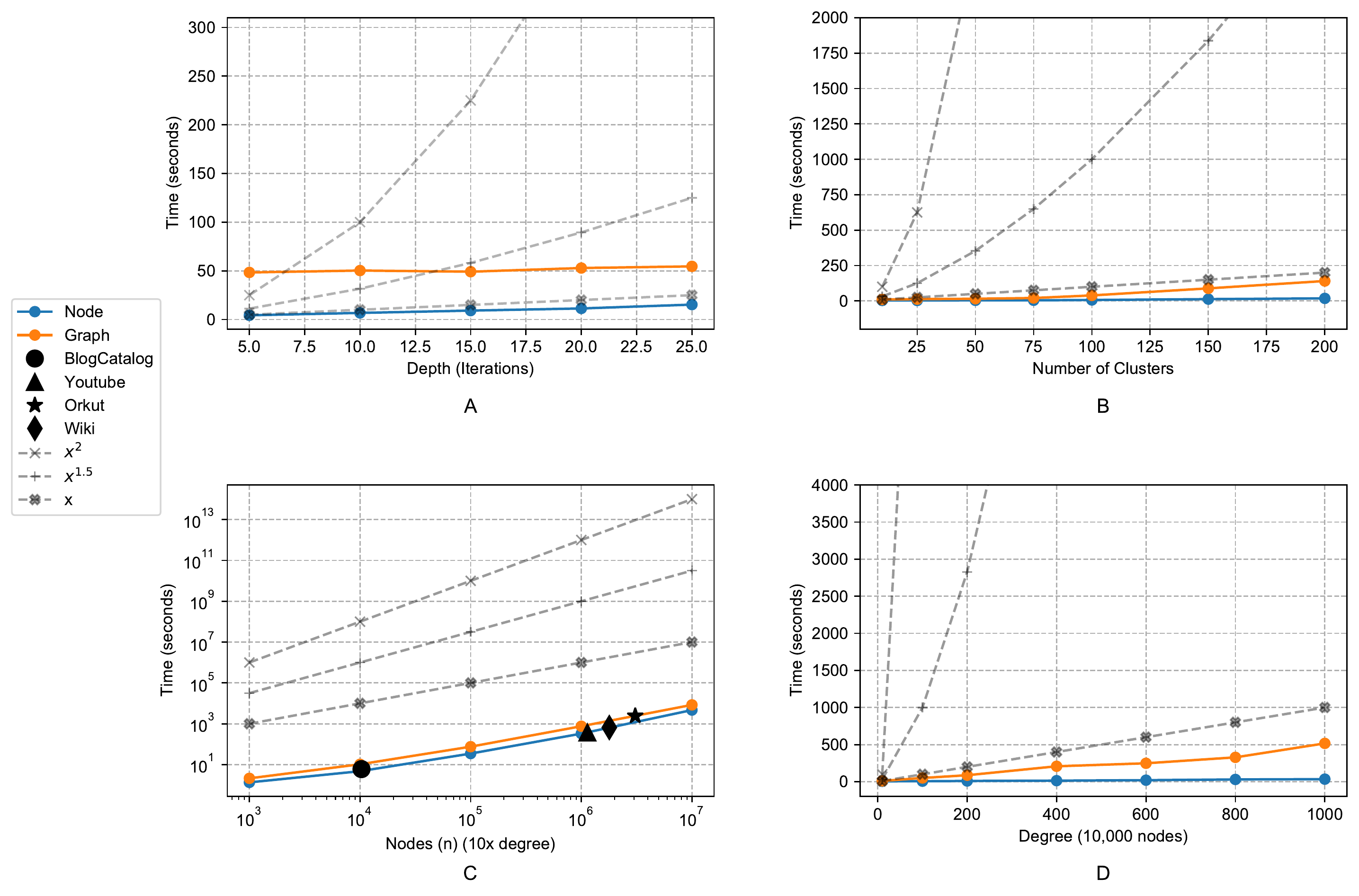}
  \caption{Effect of user-chosen hyperparameters on model inference time.}
  \label{fig:time:2}
\end{figure*}
We begin our results with a discussion on scalability, starting with an analysis of training time. Figure \ref{fig:time:1} shows the training time for a varied set of training hyperparameters discussed in the algorithm method and time complexity. For each hyperparameter tested, all others are fixed. Training time increases with the number of clusters less than quadratically (Panel B), but more than linearly, approximately $c^{1.5}$. The training time is clearly linear in the number of nodes (Panel C). Fortunately, $d<<c<<<|V|$, such that training remains highly efficient. We conduct experiments that demonstrate that depths greater than 25 are not required, nor are more than 200 clusters. Additionally, Inferential SIR-GN models can be trained on random graphs many orders of magnitude smaller than those for which inference will be conducted, and good performance still results. For no experiment in this work (other than those to present scalability) did we utilize a model with depth greater than 10, with 100 clusters, and training graphs with $|V| = 5000$. This is denoted in Figure \ref{fig:time:1} panel A as a isirgn1, marked by a triangle. \par

Figure \ref{fig:time:2} shows the time of inference for each hyperparameter described for training, plus an additional parameter for the number of edges.  We use regular random graphs to establish the time for inference on graphs of varying size with differing parameters. For training, the number of edges in the random graphs are fixed within a range that remains below $10\cdot |V|$, but this is not the case with real-world datasets. As such, we present inference time in terms of $|V|$ with $|E|$ fixed at $10\cdot |V|$, then again with $|V| = 10,000$, and increase the node degree up to $1000$. The largest graphs measured for inference time are $|V|=10^7$ and $|E| = 10^8$. At this size, generation of node representations took under two hours and graph representation inference under three hours. Importantly, we allowed the graph representation to run independently of the node inference, which means that node representations are calculated in the process of graph inference. This can be interpreted such that the \textit{total} time to obtain both node and graph representations of this size is under three hours, with the graph generation time being approximately an hour of that. \par
In \cite{nrp2020}, an excellent comparison in runtime is conducted for several relevant works, including NRP\cite{nrp2020}, AROPE\cite{arope2018}, ProNE\cite{prone2019}, NetSMF\cite{netsmf2019}, STRAP\cite{strap2019}, and VERSE\cite{verse2018}. The study was conducted on a single thread, with a processor similar to that used for our inference experiments. NRP, AROPE and ProNE showed the fastest runtime on BlogCatalog, under ten seconds. The time for a pre-trained Inferential SIR-GN model to generate node representations for BlogCatalog is approximately six seconds, in line with the fastest three models. The slower models showed exceedingly greater runtimes, with VERSE and NETSMF taking $100x$ time compared to the fastest algorithms. Importantly, we will show that a model that takes approximately 5000 seconds to train (depth 10, 100 clusters, graph-size $|V|=5000$) can be used to generate node representations that very accurately capture the structural information of a graph with over a million nodes (YouTube). This suggests that both training and inference could be performed on a massive graph in less time than taken by NetSMF, VERSE, and likely STRAP. Importantly, as the size of the target graph grows, Inferential SIR-GN \textit{gains} efficiency, as the same trained model is used and therefore training time is fixed. Training plus inference time for a graph of $|V| = 10^6$ and $|E| = 10^7$ is under 10,000 seconds (using the same hyperparameters described above). At this graph size, AROPE\cite{arope2018}, one of the most efficient models, observed a runtime $>6000s$. So, for very massive graphs, Inferential SIR-GN approaches the highest efficiency when even training time is included.

\subsection{Hyperparameter Analysis}
\begin{figure*}[h!]
    \center
    \begin{subfigure}[b]{\textwidth}{\includegraphics[width=\linewidth]{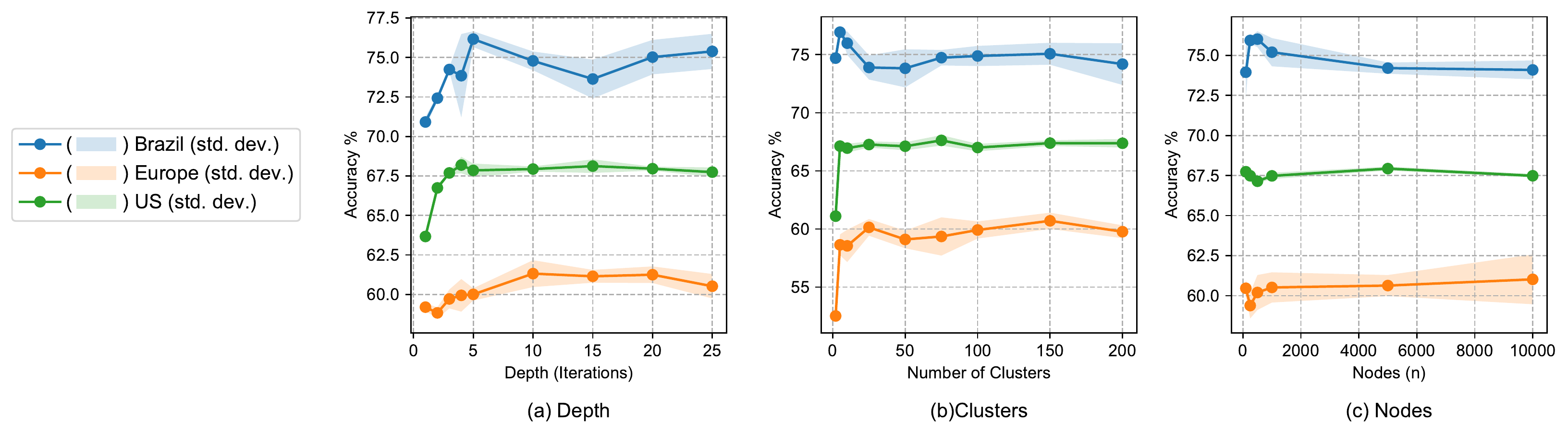}}
    \end{subfigure}
    \caption{Performance of models with different hyperparameters on three Air Traffic datasets}\label{fig:hyper}
\end{figure*}
We have examined the effect of the model hyperparameters: exploration depth, number of Kmeans clusters, and size of the training graph with respect to time complexity for model training and inference. Figure \ref{fig:hyper} presents the effect of each on model performance at node classification  tasks on three different datasets. Three independent models were generated for each hyperparameter value, with all others fixed.

Figure \ref{fig:hyper} (a) shows that lower model performance on node classification tasks is observed only at very low choices of depth, which correspond to the k-hop neighborhood incorporated into the representation for each node. Depths greater than ten show no appreciable increase in model performance, even across three datasets with approximately an order of magnitude difference between one another with respect to number of edges and nodes. 
Figure \ref{fig:hyper} (b) presents the effect of choice of clusters on the same classification task. The cluster number determines the size of the representation generated for each node per iteration of depth. As described above, each iteration is concatenated to those previous, and finally condensed using a PCA to generate the node representation. Here again, above a certain threshold (~20-50 clusters), the number of Kmeans clusters chosen does not perceptibly increase performance. 
The size of the training graph is another user-selected metric (Figure \ref{fig:hyper}c), and we generated models trained on graphs from size 100 to 10,000 nodes. Our random graph generator varies the number of edges for each graph, so each graph will have $|E|$ up to $10|V|$. Once again, above a threshold ($|V| ~ 1000$), performance does not benefit from increased training graph size. 
For the smallest real-world network (Brazil, $|V| = 131$), optimal performance may be observed when using fewer clusters, and with smaller training graphs. Experiments will show that a single Inferential SIR-GN model is more capable of capturing a node's structural information than many state-of-the art node embedding methods.

\subsection{Node Classification}
\begin{table}[h!]
    \centering
    \begin{tabular}{ l c c c c c c}
    \hline
            Model & 0.0 & 0.1 & 0.2 & 0.3 & 0.4 \\
         \hline
         \hline
         Sirgn & 100 & 83 & 71 & 62 & 57 \\
         InferSirgn & 100 & 83 & 76 & 69 & 62 \\
         GCN & 95 & 64 & 56 & 53 & 54 \\
         GIN & 100 & 80 & 75 & 65 & 59 \\
    \end{tabular}
    \caption{Accuracy on node classification for each synthetic dataset}
    \label{tab:synthnodeclass}
\end{table}

\begin{table}[h!]
    \centering
    \begin{tabular}{ l c c c c c c}
    \hline
           Model & Brazil & Europe & USA \\
         \hline
         \hline
         InferSirgn & 74.5 & 61.7 & 68.1\\ 
         STRAP & 70.71 & 65.50 & not tested\\ 
         AROPE & 64.29 & 65.00 & not tested\\ 
         VERSE & 35.71 & 42.50 & not tested\\
    \end{tabular}
    \caption{Node classification accuracy for each air traffic datasets. Note accuracies reported for STRAP, AROPE, and VERSE taken from literature reported values in \cite{strap2019}}
    \label{tab:airnodeclass}
\end{table}
We report the node classification accuracies on the Air Traffic datasets and a set of synthetic data generated by the authors for which the node's structural information is vital to classification. A graph consists of repeated substructures (one of eight possible), with added random edges (from edge percentage 0 to 40\%) between nodes, which are then re-evaluated for structural identity using the subgraph isomorphism test. We compare performance on these datasets with the original SIR-GN method, along with a Graph Convolutional Network and a Graph Isomorphism Network. Node representations generated using SIR-GN and Inferential SIR-GN are used to train an Extra Trees Classifier with 10-fold validation strategy. The number of estimators was chosen using GridSearchCV. Inferential SIR-GN most accurately classified nodes, with the GIN coming in second. GIN is expected to perform well, as it is designed to emulate the Weisfeiler-Lehman isomorphism test, and therefore classify nodes based upon structural roles. Interestingly, this performance improvement is despite the GCN, GIN, and original SIR-GN having trained on the actual datasets themselves, while Inferential SIR-GN sees them only at inference time.
On the Air Traffic datasets, we compare to the generated to maximize scalability to very large graphs. We report the literature values from experiments performed in \cite{strap2019} that compare several competing methods classified using a one vs. all logistic regression. Inferential SIR-GN (using an Extra Trees Classifier with 10-fold validataion) outperforms all other methods on the Brazil dataset, while STRAP shows the highest accuracy for the Europe Air Traffic data. 
The study in \cite{strap2019} did not evaluate the largest Air Traffic dataset, USA, where Inferential SIR-GN outperforms even the best model tested in \cite{joaristi2021sir}, including node2vec \cite{grover2016node2vec}, struc2vec \cite{ribeiro2017struc2vec}, GraphSAGE \cite{hamilton2017inductive}, and Graphwave \cite{donnat2018learning}.

\subsection{Graph Classification}
\begin{table}[h!]
    \centering
    \begin{tabular}{ l c c c c c c c}
    \hline
          Model  &Edge Labels &Node Labels & Graph Clusters & Nodes & Graphs & Depth & Accuracy\\
         \hline
         \hline
         m1 & 0 & 0 & 10 & 100 & 500 & 6 & $85.7\pm 0.82$ \\ 
         m2 & 4 & 7 & 17 & 100 & 500 & 6 &  $90.8\pm 0.80$ \\
         m3 & 10 & 10 & 20 & 100 & 500 & 6 & $88.6\pm 1.09$\\
         m4 & 15 & 15 & 25 & 100 & 500 & 6 & $88.4\pm 0.81$\\
         m5 & 20 & 20 & 30 & 100 & 500 & 6 & $89.3\pm 0.51$\\
         m6 & 25 & 25 & 35 & 100 & 500 & 6 & $89.33\pm 0.94$\\ 
         m7 & 30 & 30 & 40 & 100 & 500 & 6 & $89.2\pm 0.61$\\ 
         mutag-train & 4 & 7 & 17 & - & - & 6 & 89.3 \\ 
    \end{tabular}
    \caption{Performance of varying models on graph classification with node and edge-labeled MUTAG dataset. Note: the actual dataset has 7 node labels and 4 edge labels.}
    \label{tab:mutaginf}
\end{table}
Inferential SIR-GN generates a graph representation from the node representations by creating a structural pseudo-adjacency matrix. The node representations are clustered as described above using the Graph KMeans cluster centroids, but a graph pooling is performed at this step rather than an aggregation of the nodes' neighbors. 
The algorithm has additionally been modified to create representations from knowledge graph data, which contains nodes and edges of heterogeneous type. In order to perform machine learning tasks on these graphs, both node and edge labels must be incorporated into the structural representation for a node. The Inferential SIR-GN algorithm is modified to include node labels in the clustering and aggregation steps, rather than simply concatenate the labels to the node representation at the final step. Node representations are then separated by edge type so that each node representation is increased to $c\times el$ where $el$ is the number of edge labels. This does not increase the complexity, however, as this portion of the vector is sparse.
In order to maintain the flexibility of the algorithm to be utilized on a variety of networks, it can be trained on a larger than expected number of random node- and edge-labels, and still perform well on target graphs with fewer than training. We demostrate this flexibility in Table \ref{tab:mutaginf}, where results are displayed for graph classification on the MUTAG dataset. The table shows that with no edge labels included, the graph classification performs quite well on the dataset. Inclusion of the node and edge labels results in significant improvements to the classification accuracy, and importantly, high accuracy is still achieved with models trained on up to $4X$ the number of node labels, and $7X$ the number of edge labels as those seen in the target graph. To demonstrate this, models were first trained with the exact number of labels in the target graph, with the number of KMeans clusters at $nnl + 10$, where $nnl$ is the number of node labels. For consistency, the number of clusters that are devoted purely to the node's structural representation are maintained at ten as the number of node and edge labels is increased. We hypothesized that the PCA that finalizes the node representations would compensate for any expendable zero data created by training with a larger number of labels than will be present in the target graph. From the maintained high performance observed with even the highest number of labels tested, we gather that this assumption was correct. Importantly, this suggests that, for instance, model m7 could be used for a large variety of different datasets for graph classification tasks, and still produce excellent results. Interestingly, training on the mutag dataset itself did not yield better classification results than training on randomly generated graph data. This is unsurprising given that Inferential SIR-GN has demonstrated at least as good results as the original SIR-GN in other tasks, and may in fact improve its performance through better generalization capability. Inferential SIR-GN even outperforms the GIN reported accuracy on node classification of 89.0 \cite{xu2018powerful}, though that network was trained on the MUTAG dataset itself.

\subsection{Regression Analysis}

\begin{table*}[h!] \footnotesize \center %
\begin{tabular}{ccccccccccccc|}                                                                                                                                                                                                                                                                                                                                                                                                                                                                            \\ \cline{2-13} 
\multicolumn{1}{c|}{}           & \multicolumn{2}{c|}{PR}                                                   & \multicolumn{2}{c|}{HITS}                                                   & \multicolumn{2}{c|}{DC}                                                   & \multicolumn{2}{c|}{EC}                                                   & \multicolumn{2}{c|}{BC}                                                   & \multicolumn{2}{c|}{NCN}                              \\ \cline{2-13} 
\multicolumn{1}{c|}{}                      & \multicolumn{2}{c|}{$R^2$}         & \multicolumn{2}{c|}{$R^2$}                  & \multicolumn{2}{c|}{$R^2$}          & \multicolumn{2}{c|}{$R^2$}           & \multicolumn{2}{c|}{$R^2$}               & \multicolumn{2}{c|}{$R^2$}      \\ \hline  \hline
                                & \multicolumn{12}{c}{Brazilian air-traffic network}   \\ \hline
\multicolumn{1}{c|}{Degree}     & \multicolumn{2}{c|}{0.990}          & \multicolumn{2}{c|}{0.962}       & \multicolumn{2}{c|}{-}          & \multicolumn{2}{c|}{0.962}          & \multicolumn{2}{c|}{0.571}             & \multicolumn{2}{c|}{0.946}          \\ \hline
\multicolumn{1}{c|}{node2vec}            & \multicolumn{2}{c|}{0.111}             & \multicolumn{2}{c|}{0.203}               & \multicolumn{2}{c|}{0.137}            & \multicolumn{2}{c|}{0.203}                 & \multicolumn{2}{c|}{-0.043}           & \multicolumn{2}{c|}{0.324}        \\
\multicolumn{1}{c|}{struc2vec}          & \multicolumn{2}{c|}{0.968}                  & \multicolumn{2}{c|}{0.972}                  & \multicolumn{2}{c|}{0.975}                    & \multicolumn{2}{c|}{0.972}                  & \multicolumn{2}{c|}{0.304}            & \multicolumn{2}{c|}{0.959}          \\
\multicolumn{1}{c|}{GraphWave}           & \multicolumn{2}{c|}{0.978}             & \multicolumn{2}{c|}{0.958}                 & \multicolumn{2}{c|}{0.975}                   & \multicolumn{2}{c|}{0.956}                & \multicolumn{2}{c|}{0.383}             & \multicolumn{2}{c|}{0.920}         \\
\multicolumn{1}{c|}{GraphSAGE}       & \multicolumn{2}{c|}{0.253}                  & \multicolumn{2}{c|}{0.333}                  & \multicolumn{2}{c|}{0.275}            & \multicolumn{2}{c|}{0.333}                  & \multicolumn{2}{c|}{-0.070}          &\multicolumn{2}{c|}{0.425}        \\
\multicolumn{1}{c|}{ARGA}            & \multicolumn{2}{c|}{0.927}                    & \multicolumn{2}{c|}{0.954}                 & \multicolumn{2}{c|}{0.949}               & \multicolumn{2}{c|}{0.954}                 & \multicolumn{2}{c|}{0.606}                    &\multicolumn{2}{c|}{0.933}          \\
\multicolumn{1}{c|}{DRNE}          & \multicolumn{2}{c|}{0.991}                   & \multicolumn{2}{c|}{0.974}                 & \multicolumn{2}{c|}{0.998}                    & \multicolumn{2}{c|}{0.973}                 & \multicolumn{2}{c|}{0.569}                     & \multicolumn{2}{c|}{0.966}          \\
\multicolumn{1}{c|}{SIR-GN: GMM}   & \multicolumn{2}{c|}{0.996}          & \multicolumn{2}{c|}{0.998}                   & \multicolumn{2}{c|}{0.998}                   & \multicolumn{2}{c|}{0.998}          & \multicolumn{2}{c|}{0.618}          &\multicolumn{2}{c|}{0.965}        \\
\multicolumn{1}{c|}{SIR-GN: K-Means}  & \multicolumn{2}{c|}{0.997}  & \multicolumn{2}{c|}{\textbf{0.999}}  & \multicolumn{2}{c|}{\textbf{0.999}}   & \multicolumn{2}{c|}{\textbf{0.999}}  & \multicolumn{2}{c|}{0.795}  & \multicolumn{2}{c|}{\textbf{0.984}}\\
\multicolumn{1}{c|}{Inferential SIR-GN}  & \multicolumn{2}{c|}{\textbf{0.999}}   & \multicolumn{2}{c|}{\textbf{0.999}} & \multicolumn{2}{c|}{\textbf{0.999}}  & \multicolumn{2}{c|}{\textbf{0.999}}  & \multicolumn{2}{c|}{\textbf{0.900}}  & \multicolumn{2}{c|}{0.979} \\
\hline
\\ \hline  \hline
                                & \multicolumn{12}{c}{European air-traffic network}
                                \\ \hline
\multicolumn{1}{c|}{Degree}           & \multicolumn{2}{c|}{0.983}                    & \multicolumn{2}{c|}{0.973}          & \multicolumn{2}{c|}{-}          & \multicolumn{2}{c|}{0.973}             & \multicolumn{2}{c|}{0.522}      & \multicolumn{2}{c}{0.850} \\ \hline
\multicolumn{1}{c|}{node2vec}            & \multicolumn{2}{c|}{0.112}                  & \multicolumn{2}{c|}{0.232}          & \multicolumn{2}{c|}{0.166}           & \multicolumn{2}{c|}{0.232}           & \multicolumn{2}{c|}{0.009}          & \multicolumn{2}{c|}{0.280}                     \\
\multicolumn{1}{c|}{struc2vec}           & \multicolumn{2}{c|}{0.979}   & \multicolumn{2}{c|}{0.979}          & \multicolumn{2}{c|}{0.989}    & \multicolumn{2}{c|}{0.979}           & \multicolumn{2}{c|}{0.744}            & \multicolumn{2}{c|}{0.868}                    \\
\multicolumn{1}{c|}{GraphWave}         & \multicolumn{2}{c|}{0.856}          & \multicolumn{2}{c|}{0.968}          & \multicolumn{2}{c|}{0.873}           & \multicolumn{2}{c|}{0.968}        & \multicolumn{2}{c|}{0.808}  & \multicolumn{2}{c|}{0.879}                   \\
\multicolumn{1}{c|}{GraphSAGE}           & \multicolumn{2}{c|}{0.737}                & \multicolumn{2}{c|}{0.796}         & \multicolumn{2}{c|}{0.767}           & \multicolumn{2}{c|}{0.796}      & \multicolumn{2}{c|}{0.279}           &\multicolumn{2}{c|}{0.768}         \\
\multicolumn{1}{c|}{ARGA}          & \multicolumn{2}{c|}{0.991}          & \multicolumn{2}{c|}{0.986}    & \multicolumn{2}{c|}{0.992}    & \multicolumn{2}{c|}{0.986}           & \multicolumn{2}{c|}{0.874}        & \multicolumn{2}{c|}{0.897}         \\
\multicolumn{1}{c|}{DRNE}  & \multicolumn{2}{c|}{0.986}   & \multicolumn{2}{c|}{0.986}       & \multicolumn{2}{c|}{0.998}   & \multicolumn{2}{c|}{0.986}    & \multicolumn{2}{c|}{0.610}          &\multicolumn{2}{c|}{0.886}                    \\
\multicolumn{1}{c|}{SIR-GN: GMM} & \multicolumn{2}{c|}{0.998}          & \multicolumn{2}{c|}{\textbf{0.999}}   & \multicolumn{2}{c|}{\textbf{0.999}}   & \multicolumn{2}{c|}{\textbf{0.999}}  & \multicolumn{2}{c|}{\textbf{0.871}} &\multicolumn{2}{c|}{0.958}         \\
\multicolumn{1}{c|}{SIR-GN: K-Means} & \multicolumn{2}{c|}{0.997}  & \multicolumn{2}{c|}{\textbf{0.999}}  & \multicolumn{2}{c|}{0.998} & \multicolumn{2}{c|}{\textbf{0.999}} & \multicolumn{2}{c|}{0.853}   &\multicolumn{2}{c|}{0.955}  \\
\multicolumn{1}{c|}{Inferential SIR-GN} & \multicolumn{2}{c|}{\textbf{0.999}}  & \multicolumn{2}{c|}{\textbf{0.999}}  & \multicolumn{2}{c|}{\textbf{0.999}}  & \multicolumn{2}{c|}{\textbf{0.999}} & \multicolumn{2}{c|}{0.853}  & \multicolumn{2}{c|}{\textbf{0.965}} \\
\hline
\\ \hline  \hline
& \multicolumn{12}{c}{American air-traffic network}                  \\ \hline
\multicolumn{1}{c|}{Degree}  & \multicolumn{2}{c|}{0.889}          & \multicolumn{2}{c|}{0.932}        & \multicolumn{2}{c|}{-}    & \multicolumn{2}{c|}{0.932}     & \multicolumn{2}{c|}{0.127}                    &\multicolumn{2}{c|}{0.934}        \\ \hline
\multicolumn{1}{c|}{node2vec}     & \multicolumn{2}{c|}{0.554}           & \multicolumn{2}{c|}{0.716}    & \multicolumn{2}{c|}{0.273}    & \multicolumn{2}{c|}{0.716}           & \multicolumn{2}{c|}{0.230}            & \multicolumn{2}{c|}{0.770}        \\
\multicolumn{1}{c|}{struc2vec} & \multicolumn{2}{c|}{0.921}       & \multicolumn{2}{c|}{0.971}          & \multicolumn{2}{c|}{0.995}   & \multicolumn{2}{c|}{0.971}         & \multicolumn{2}{c|}{-0.168}            &\multicolumn{2}{c|}{0.953}         \\
\multicolumn{1}{c|}{GraphWave} & \multicolumn{2}{c|}{0.965}       & \multicolumn{2}{c|}{0.989}          & \multicolumn{2}{c|}{0.989}   & \multicolumn{2}{c|}{0.989}         & \multicolumn{2}{c|}{-0.212}                 &\multicolumn{2}{c|}{0.983}         \\
\multicolumn{1}{c|}{GraphSAGE}   & \multicolumn{2}{c|}{0.844}          & \multicolumn{2}{c|}{0.966}       & \multicolumn{2}{c|}{0.920}  & \multicolumn{2}{c|}{0.956}          & \multicolumn{2}{c|}{0.216}          & \multicolumn{2}{c|}{0.947}          \\

\multicolumn{1}{c|}{ARGA}  & \multicolumn{2}{c|}{0.976}   & \multicolumn{2}{c|}{0.975}         & \multicolumn{2}{c|}{0.989}     & \multicolumn{2}{c|}{0.975}        & \multicolumn{2}{c|}{0.628}              & \multicolumn{2}{c|}{0.950}         \\
\multicolumn{1}{c|}{DRNE}   & \multicolumn{2}{c|}{0.941}     & \multicolumn{2}{c|}{0.973}     & \multicolumn{2}{c|}{0.973}       & \multicolumn{2}{c|}{0.996}         & \multicolumn{2}{c|}{-0.387}       &  \multicolumn{2}{c|}{0.976}          \\
\multicolumn{1}{c|}{SIR-GN: GMM} & \multicolumn{2}{c|}{0.968}    & \multicolumn{2}{c|}{\textbf{0.999}}        & \multicolumn{2}{c|}{0.998}    & \multicolumn{2}{c|}{\textbf{0.999}}  & \multicolumn{2}{c|}{-0.868}          &\multicolumn{2}{c|}{0.976}          \\
\multicolumn{1}{c|}{SIR-GN: K-Means} & \multicolumn{2}{c|}{0.978}  & \multicolumn{2}{c|}{\textbf{0.999}}  & \multicolumn{2}{c|}{0.998}  & \multicolumn{2}{c|}{\textbf{0.999}} & \multicolumn{2}{c|}{0.259} & \multicolumn{2}{c|}{\textbf{0.993}} \\
\multicolumn{1}{c|}{Inferential SIR-GN} & \multicolumn{2}{c|}{\textbf{0.994}}  & \multicolumn{2}{c|}{\textbf{0.999}}& \multicolumn{2}{c|}{\textbf{0.999}} & \multicolumn{2}{c|}{\textbf{0.999}}  & \multicolumn{2}{c|}{\textbf{0.855}}  &  \multicolumn{2}{c|}{0.988} \\
\hline
\\  \hline  \hline
& \multicolumn{12}{c}{BlogCatalog}                                  \\ \hline
\multicolumn{1}{c|}{Degree}   & \multicolumn{2}{c|}{0.972}      & \multicolumn{2}{c|}{0.967}           & \multicolumn{2}{c|}{-}    & \multicolumn{2}{c|}{0.967}        & \multicolumn{2}{c|}{0.834}                  & \multicolumn{2}{c|}{0.912}        \\ \hline
\multicolumn{1}{c|}{node2vec}    & \multicolumn{2}{c|}{0.736}         & \multicolumn{2}{c|}{0.819}      & \multicolumn{2}{c|}{0.747}   & \multicolumn{2}{c|}{0.819}          & \multicolumn{2}{c|}{0.409}                 &\multicolumn{2}{c|}{0.838}        \\
\multicolumn{1}{c|}{struc2vec}   & \multicolumn{2}{c|}{0.974}          & \multicolumn{2}{c|}{0.979}        & \multicolumn{2}{c|}{0.980}    & \multicolumn{2}{c|}{0.979}           & \multicolumn{2}{c|}{0.866}               &\multicolumn{2}{c|}{0.926}        \\
\multicolumn{1}{c|}{GraphWave}& \multicolumn{2}{c|}{0.980}      & \multicolumn{2}{c|}{0.994}            & \multicolumn{2}{c|}{0.936}     & \multicolumn{2}{c|}{0.981} & \multicolumn{2}{c|}{0.882}            & \multicolumn{2}{c|}{0.960}      \\
\multicolumn{1}{c|}{GraphSAGE}  & \multicolumn{2}{c|}{0.292}        & \multicolumn{2}{c|}{0.446}           & \multicolumn{2}{c|}{0.311}     & \multicolumn{2}{c|}{0.446}       &  \multicolumn{2}{c|}{0.058}              & \multicolumn{2}{c|}{0.549}         \\
\multicolumn{1}{c|}{ARGA}    & \multicolumn{2}{c|}{0.978}      & \multicolumn{2}{c|}{0.986}                & \multicolumn{2}{c|}{0.983}    & \multicolumn{2}{c|}{0.986}           & \multicolumn{2}{c|}{0.916}    &\multicolumn{2}{c|}{0.949}    \\
\multicolumn{1}{c|}{DRNE} & \multicolumn{2}{c|}{0.542}   & \multicolumn{2}{c|}{0.734}     & \multicolumn{2}{c|}{0.567}   & \multicolumn{2}{c|}{0.734}  & \multicolumn{2}{c|}{0.242}           &\multicolumn{2}{c|}{0.883}         \\
\multicolumn{1}{c|}{SIR-GN: GMM} & \multicolumn{2}{c|}{0.981}  & \multicolumn{2}{c|}{0.993}    & \multicolumn{2}{c|}{0.959}    & \multicolumn{2}{c|}{0.993}   & \multicolumn{2}{c|}{0.837}            & \multicolumn{2}{c|}{\textbf{0.975}}          \\
\multicolumn{1}{c|}{SIR-GN: K-Means}  & \multicolumn{2}{c|}{0.985}     & \multicolumn{2}{c|}{0.997}   & \multicolumn{2}{c|}{0.961}  & \multicolumn{2}{c|}{0.996}  & \multicolumn{2}{c|}{0.879}  &\multicolumn{2}{c|}{0.968} \\ 
\multicolumn{1}{c|}{Inferential SIR-GN}   & \multicolumn{2}{c|}{\textbf{0.999}}         & \multicolumn{2}{c|}{\textbf{0.999}}  & \multicolumn{2}{c|}{\textbf{0.999}}  & \multicolumn{2}{c|}{\textbf{0.999}} & \multicolumn{2}{c|}{\textbf{0.964}}  &\multicolumn{2}{c|}{0.970}  \\ \hline
\\  \hline  \hline
& \multicolumn{12}{c}{YouTube}                                  \\ \hline
\multicolumn{1}{c|}{Inferential SIR-GN}   & \multicolumn{2}{c|}{0.999}        & \multicolumn{2}{c|}{-}       & \multicolumn{2}{c|}{0.999} & \multicolumn{2}{c|}{-}          & \multicolumn{2}{c|}{-}          & \multicolumn{2}{c|}{-}     \\ \hline


\end{tabular}

\caption{Results of the regression experiments on network metrics: PageRank (PR), HITS Authority/Hub (HITS), Degree Centrality (DC), Eigenvector Centrality (EC), Betweenness Centrality (BC), Node Clique Number (NCN). Values in bold correspond to the best results obtained for each metric.}
\label{tab:regression}
\end{table*}

In \cite{joaristi2021sir}, authors propose to understand the ability of several node embedding methods to capture the structural information of the nodes in a network, including the first-generation SIR-GN. Briefly, node representations are generated using each model, then those embeddings are tested for their ability to predict graph metrics associated with node structures, including: PageRank, Hits@Authority, Degree Centrality, Eigenvector Centrality, Betweenness Centrality, and Node Clique Number, each described above. The outcome of the experiment in \cite{joaristi2021sir} is included for reference (Table \ref{tab:regression}, and we compare Inferential SIR-GN with the existing data with respect to the $R^2$. The original SIR-GN outperformed many existing state-of-the-art node embedding methods at node structural role representation on every dataset tested. An important difference to note is that SIR-GN uses the target graph to train the clustering KMeans, and so the model has been retrained for each unique dataset. Inferential SIR-GN results are all generated from a single pre-trained model. Another difference is the size of the representation: for consistency with the node classification experiments, we are using a representation size of 100 instead of 20. This generates a more universal model of Inferential SIR-GN that will be used for multiple datasets, some of which are very large compared to the training graphs. 
Inferential SIR-GN uses the same internal node clustering and aggregation as does the original model, and as such, we hypothesize that it will share the excellent capacity to capture node structural information. As described above, Inferential SIR-GN improves the ability of the original not only in it's inferential capability, but in its structural representation, by capturing the evolution of each node's neighborhood exploration. Therefore, it is unsurprising that our model performs at least as well as the non-inferential model at this task.
To further demonstrate the flexibility of a single model, we have included the YouTube dataset, a network with over one million nodes. Inferential SIR-GN, though trained on random graphs with 5000 nodes and with a depth of 10, and 100 KMeans clusters, still fully captures the structural information of the nodes in the network with respect to the PageRank and Degree Centrality measures. Additional metrics could not be tested in feasible time, as calculating Node Clique Number and Betweenness centrality is time prohibitive on such a massive graph. Additionally, nonlinear kernels required for regression experiments with Hits and Eigenvector Centrality were also exceedingly time costly on a network of this size. However, the data collected suggests that one model can be trained, then successfully used to generate structural node representations for any required graph. We have tested the same model on a diversity of network sizes, ranging from ~100 nodes to over 1 million nodes and millions of edges. We propose that, with an inference time equally as efficient as the current fastest algorithms for massive graphs, even greater total gains in efficiency are obtained by reuse of the same model for all tasks.

\clearpage
\printbibliography

\end{document}